\newcommand{\e}{\mathrm{e}}
\pgfplotsset{compat = newest}
\crefname{enumi}{}{}
\crefname{equation}{}{}
\def\@tocline#1#2#3#4#5#6#7{\relax
\ifnum #1>\c@tocdepth 
\else
\par \addpenalty\@secpenalty\addvspace{#2}%
\begingroup \hyphenpenalty\@M
\@ifempty{#4}{%
\@tempdima\csname r@tocindent\number#1\endcsname\relax
}{%
\@tempdima#4\relax
}%
\parindent\z@ \leftskip#3\relax \advance\leftskip\@tempdima\relax
\rightskip\@pnumwidth plus4em \parfillskip-\@pnumwidth
#5\leavevmode\hskip-\@tempdima
\ifcase #1
\or\or \hskip 1em \or \hskip 2em \else \hskip 3em \fi%
#6\nobreak\relax
\dotfill\hbox to\@pnumwidth{\@tocpagenum{#7}}\par
\nobreak
\endgroup
\fi}
\newtheorem{theorem}{Theorem}
\newtheorem*{theorem*}{Theorem}
\newtheorem{proposition}{Proposition}[section]
\newtheorem{lemma}[proposition]{Lemma}
\newtheorem{corollary}[proposition]{Corollary}
\theoremstyle{definition}
\newtheorem{definition}[proposition]{Definition}
\newtheorem{remark}[proposition]{Remark}
\newtheorem{example}[proposition]{Example}
\numberwithin{equation}{section}
\def \R {\mathbb {R}}
\def \E {\mathbb{E}}
\def \P {\mathbb{P}}
\def \W {\mathrm{W}}
\def\diam{\operatorname{diam}}
\def\dist{\operatorname{dist}}
\renewcommand{\epsilon}{\varepsilon}
\newcommand{\dz}{\, {\rm d} z}
\newcommand{\dt}{\, {\rm d} t}
\newcommand{\dy}{\, {\rm d} y}
\newcommand{\dV}{\, {\rm d} V}
\newcommand{\dvar}[1]{\, {\rm d} #1}
\newcommand{\logamma}[2]{\gamma \left(#1,#2 \right)}
\newcommand{\interior}{\operatorname{Int}}
\newcommand\Set[2]{\{\,#1\mid#2\,\}}
\DeclareFontFamily{U}{mathx}{\hyphenchar\font45}
\DeclareFontShape{U}{mathx}{m}{n}{
<5> <6> <7> <8> <9> <10>
<10.95> <12> <14.4> <17.28> <20.74> <24.88>
mathx10
}{}
\DeclareSymbolFont{mathx}{U}{mathx}{m}{n}
\DeclareMathAccent{\widecheck}{0}{mathx}{"71}
\newcommand{\norm}[1]{\left\lVert#1\right\rVert}
\begin{document}
\title[]{Exploring singularities in data with the graph Laplacian: An explicit approach}
\email{\color{blue} benny.avelin@math.uu.se}
\author[Andersson]{Martin Andersson}
\address{Martin Andersson,
  Department of Mathematics,
  Uppsala University,
  S-751 06 Uppsala,
  Sweden}
\email{\color{blue} martin.andersson@math.uu.se}
\author[Avelin]{Benny Avelin}
\address{Benny Avelin,
  Department of Mathematics,
  Uppsala University,
  S-751 06 Uppsala,
  Sweden}

\begin{abstract}
  We develop theory and methods that use the graph Laplacian to analyze the geometry of the underlying manifolds of datasets. Our theory provides theoretical guarantees and explicit bounds on the functional forms of the graph Laplacian when it acts on functions defined close to singularities of the underlying manifold.
  We use these explicit bounds to develop tests for singularities and propose methods that can be used to estimate geometric properties of singularities in the datasets.
\end{abstract}
\subjclass{Primary 58K99; Secondary 68R99, 60B99.}
\keywords{Graph Laplacian, geometry, singularities}

\maketitle

\section{Introduction}
High dimensional data is common in many research problems across academic fields, and such data can often be represented as points collected or sampled from $\R^N$. A common assumption is that the dataset $X = \{X_i\}_i^n$ $ \subset \mathbb{R}^N$ lies on a lower-dimensional set $\Omega$ and is in fact a sample from some probability distribution over $\Omega$. A further assumption, that makes our model of data more tractable, is that $\Omega$ can be represented as the union of several well-behaved manifolds, i.e. $\Omega = \cup_i \Omega_i$. Here, each $\Omega_i$ could represent a different class in a classification problem: For instance, if a dataset contains two classes, $i$ and $j$, class $i$ might be contained in $\Omega_i$ and class $j$ in $\Omega_j$, with the two classes potentially being disjoint. However, classification is not always so clear-cut: For instance, in the MNIST dataset, handwritten digits of $"1" \in \Omega_1$ and $"7"\in \Omega_7$ can appear very similar, suggesting that $\Omega_1 \cap \Omega_7 \neq \emptyset$. Therefore, understanding geometric situations such as intersections is of interest in classification problems.

In our manifold model of data, an intersection between two different manifolds $\Omega_i, \Omega_j$ is either represented just as such, or it can be viewed as a singularity if we consider $\Omega = \Omega_i \cup \Omega_j$ as a single manifold. Other  regions in $\Omega$ that can be viewed as singular, such as boundaries and edges, may also be of interest as they can signify important features in the data. Regions that are part of just one manifold $\Omega_i$, and in its interior, we consider as non-singular.

To study such singularities, we use the graph Laplacian $L_{n,t}$. This operator, which depends on the number of data points $n$ and a parameter $t$, can act on functions defined on the dataset $X$. In non-singular regions, as $n$ tends to infinity and $t$ tends to 0, $L_{n,t}$ converges to the Laplace-Beltrami operator~\cite{belkin2008towards}. In this work, we primarily study the behavior of $x \to L_{n,t}f(x)$ for functions $f$, when $x$ is close to singular points.

\subsection{Motivation}
Consider data points situated in the ambient space, where the underlying manifold structure is unknown. We examine the graph Laplacian operator applied to the linear function $f(x) = v \cdot x$, where $v$ is a unit vector. On our discrete dataset, this function reduces to the vector $\mathbf{f} = (v \cdot X_1, \ldots, v \cdot X_n) \in \R^n$. The contribution of this paper is that we calculate explicitly how the graph Laplacian acts on such linear functions $f$, which proves to be valuable for estimation purposes.

The choice of $f$ is motivated by the convergence properties of $L_{n,t}$ to the Laplace-Beltrami operator. In the interior of $\Omega$, we expect that $L_{n,t}f(x) \approx 0$. However, for singular points like intersections, the limit operator is of first order~\cite{BQWZ}, and $L_{n,t}f(x) \neq 0$, which can be seen in \cref{fig:lap}.

Our results show how $x \to L_{t}f(x)$ and, through a finite-sample bound, how $x \to L_{n,t}f(x)$ behaves. More specifically, given $x_0 \in \Omega_i$ near some singularity, and $x$ in the ball $B_R(x_0)$, including the case when $x \not \in \Omega_i$, we show how the function $x \to L_{n,t}f(x)$ deviates from being constantly 0 and has specific  functional forms. These forms depend on the type of singularity.

\subsection{Overview of results}
In \cref{sec:flat}, we consider the scenario where $\Omega = \cup_{i=1}^m \Omega_i$ is a union of compact $d$-dimensional flat (linear) submanifolds of $\R^N$. This geometric configuration, illustrated in \cref{fig:thm2}, is particularly relevant to the neural network architecture discussed in \cref{sec:neural}.

To set up the results, we start with an $x_0 \in \Omega$ which is assumed to be a flat manifold, and let $x \in B_{R}(x_0)$, where $R=\sqrt{t}r_0>0$, and use $\hat x$ to denote the projection of $x$ to the tangent space of $\Omega$. We also define $v_{n,\Omega}$ as the projection of $v$ onto $x-\hat x$, and $v_{n,\partial \Omega}$ is the projection of $v$ onto the outwards normal of $\partial \Omega$, which is assumed to be flat (locally linear) close to $x$. Then we show the following:
\begin{itemize}
  \item In \cref{thm:explicit:intersection}, we let $\| x- x_0 \| = r\sqrt{t}$ and $\theta$ is the angle between vectors $x-x_0$ and $\hat x - x_0$. If $x$ is not close to $\partial \Omega$, then
        \begin{equation*}
          L_t f(x) = A(x)t^{\frac{d+1}{2}} v_{n,\Omega} \sin\theta\, r {\e}^{-\sin^2\theta\, r^2} + t^{\frac{d+1}{2}}B(x) {\e}^{-r_0^2}.
        \end{equation*}
        The function $A$ is bounded between $\frac{p}{2}\pi^{d/2}$ and $p\pi^{d/2}$, where $p$ is the uniform density on $\Omega$, and $B$ is uniformly bounded. Note that for small $t$ we have $r_0$ can be taken large and thus the last term is very small.
  \item \cref{thm:explicit:boundary} shows what happens when $x$ is close to $\partial \Omega$:
        \begin{align*}
          L_t f(x) & =\widehat{A}_1(x) t^{\frac{d+1}{2}} v_{n,\Omega} \sin\theta\, r {\e}^{-\sin^2\theta\, r^2}                                            \\
                   & \qquad + \widehat{A}_2(x) t^{\frac{d+1}{2}} v_{n,\partial \Omega} {\e}^{-\sin^2\theta\, r^2} \quad + t^{\frac{d+1}{2}} B(x) {\e}^{-r_0^2},
        \end{align*}
        where functions $\widehat A_1, \widehat A_2$ and $B$ have explicitly computable bounds.
\end{itemize}
In \cref{sec:general} and \cref{sec:noise} we prove more general results:
\begin{itemize}
  \item In \cref{thm:general} we relax the conditions on $\Omega$, considering non-flat manifolds, and prove a weaker version of \cref{thm:explicit:intersection}.
  \item In \cref{thm:noisy} we relax the conditions further, and allow for noise when sampling from $\Omega$.
\end{itemize}
In \cref{sec:hypothesis} we show how the results can be used to construct hypothesis tests for intersections in data. We test this method in \cref{sec:num:hypo}. We also show how such hypothesis tests can be used to detect singularities in zero sets of neural networks in \cref{sec:num:neural}.

Finally, in \cref{sec:num:sing} we propose methods to find intersections in data and estimate the angle of such intersections, which are motivated by the aforementioned theorems and \cref{cor:intersection}. We also provide numerical experiments, in \cref{sec:numerical}, to test these methods.

\section{Earlier work}

\begin{figure}[ht]
  \includegraphics[width=12cm]{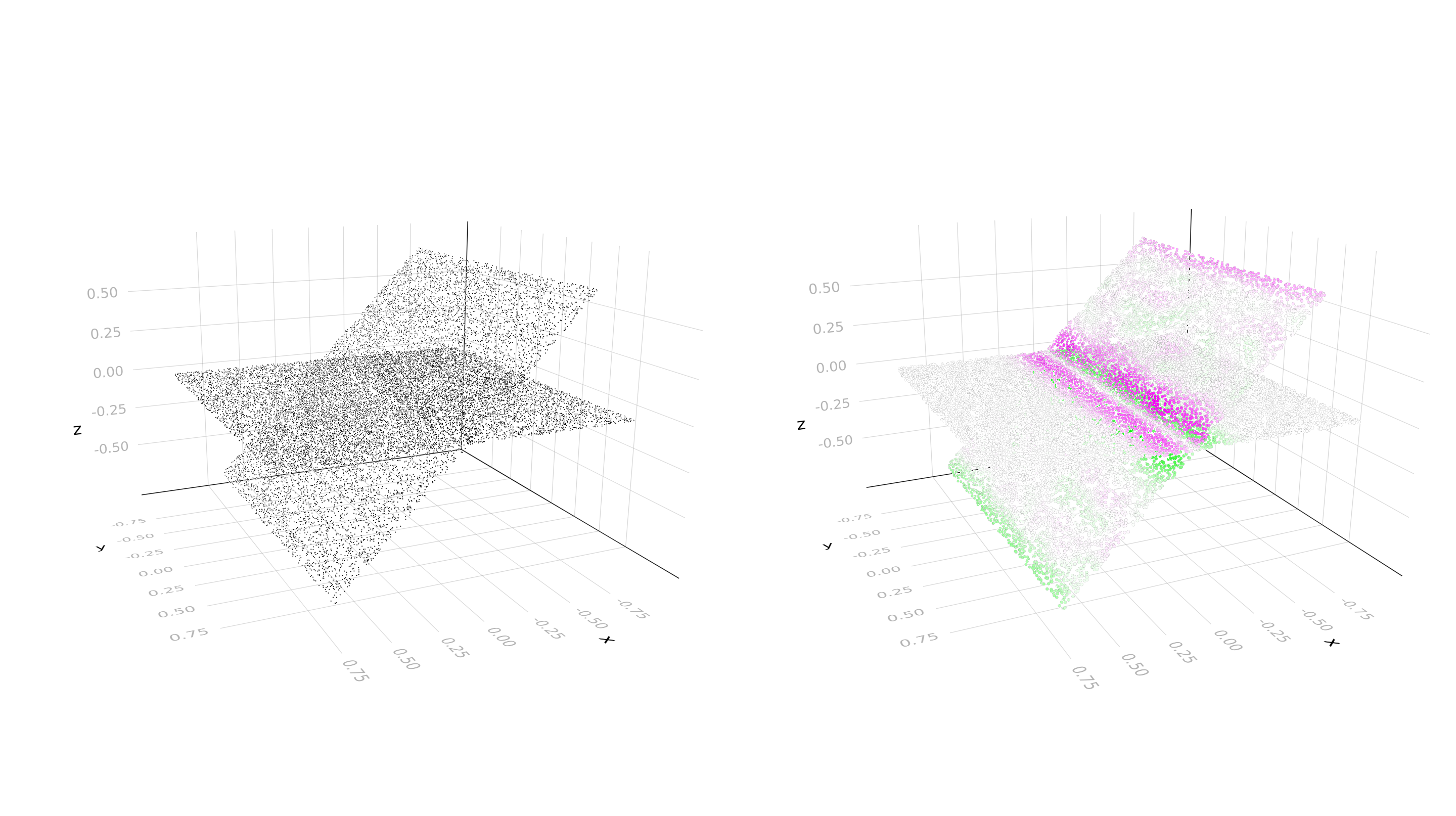}
  \centering
  \caption{Graph Laplacian $L_{n,t}$ acting on a linear function $f$. Purple color showing positive, and green color negative values of $L_{n,t}f$, where lack of color indicates values near 0}
  \label{fig:lap}
\end{figure}
The framework of assuming an underlying low-dimensional manifold of data, in conjunction with graph-related tools and, in particular, the graph Laplacian, has been used extensively. Some examples include work in clustering~\cite{belkin2001laplacian,kannan2004clusterings,luxburg2004convergence,shi2000normalized,ng2001spectral}, dimensionality reduction~\cite{belkin2003laplacian,nadler2006diffusion}, and semi-supervised learning~\cite{belkin2004semi}.

Several of the approaches to study datasets that use the graph Laplacian leverage that if the manifold is smooth enough and well-behaved, then the graph Laplacian approximates some well-understood operator (for instance, the Laplace-Beltrami operator~\cite{belkin2006convergence}), which has useful mathematical properties.

Therefore, convergence properties of the graph Laplacian becomes important, and it has been studied in~\cite{belkin2008towards,bousquet2003measure,lafon2004diffusion,nadler2006diffusion}. In particular, and highly influential of this paper, is what the asymptotic convergence looks like near singularities of the manifold, which was shown in~\cite{BQWZ}.

The special case that $\Omega$ is composed of linear manifolds has received earlier attention. For instance, trying to recover facts about the underlying geometry has been explored in~\cite{vidal2005generalized,chen2009foundations}. However, these methods have not utilized the graph Laplacian, but other means.

\subsection{Main motivating application} \label{sec:neural}
There is a lot of interest in understanding the geometry of the loss landscape of neural networks. In particular, zero sets of the loss function.

As a motivating example for the methodology we develop in this paper, consider the following problem.
Consider the following class of neural networks:
\begin{equation}\label{eq:net_sets}
  f_W(x) = \sum_{i=1}^k a_i (w_i \cdot x)_+, \quad x \in \mathbb{S}^1,\, a_i = \pm 1,\, w_i \in \R^2
\end{equation}
where $\mathbb{S}^1$ is the unit circle. For general spheres this is what is called a spherical neural network and has in essence all the same properties as a standard single hidden layer network. This class of networks has been studied in for instance~\cite{arora2019fine,du2018gradient,avelin2020approximation}. Let a target function $g(x) = f_{W^\ast}(x)$ be given, but with unknown parameter $W^\ast$. We are interested in the set of parameters $\Omega_\delta$ such that
\begin{equation*}
  \Omega_\delta := \{W \in \R^{2k}: |f_W(x) - g(x)| < \delta, \quad x \in \mathbb{S}^1 \}.
\end{equation*}
We know by the definition of such networks that $\Omega_\delta$ is the sublevelset of a piecewise linear function. It is thus clear that the sublevelset is itself a union of linear manifolds.

\section*{Declarations of interest}
None.

\section{Basic mathematical objects and theory}
In this section, we provide more precise definitions and introduce the basic mathematical theory we will be using to present and prove our results.

\subsection{Conditions on manifolds}\label{sec:manifolds}
We will consider sets of the form $\Omega = \cup_i^m \Omega_i$, where each $\Omega_i$ is a smooth and compact $d$-dimensional Riemannian submanifold of $\mathbb{R}^N$. We will assume that if any pair $\Omega_i,\Omega_j$, and $i\neq j$, have a non-empty intersection, then this intersection will have dimension lower than $d$.

Our analysis will assume that we only have access to samples from $\Omega$, and so associated to $\Omega$ there is a probability measure with uniform density $p: \Omega \to \R$. Relaxing this would not significantly alter the main arguments, but the analysis would become more cumbersome.

For any point $x \in \Omega_i$ in the interior, we can consider the tangent space $T_{\Omega_i,x} \simeq \R^d$, which we will identify as a subspace of the ambient space $\R^N$. More precisely, given open subsets $U\subset \R^d$ and $W \subset \Omega_i$ ($W$ is open in the subspace topology of $\Omega_i$), and a coordinate chart $\alpha: U \to W$ such that $\alpha(0) = x$, we define $T_{\Omega_i,x}$ as the image of $\R^d$ under the action of the Jacobian. We denote the Jacobian $D\alpha:U \to \R^{N \times d}$, evaluated at 0, by $D\alpha(0)$. The best linear approximation to $u \mapsto \alpha(u)$ is, of course, given by $u \mapsto x + D\alpha(0)u$, and $x + T_{\Omega_i,x}$ is a first order approximation to $\Omega_i$ at $x$.

The definition of $\Omega$ implies that an interior point $x\in \Omega$ can have more than one associated tangent space. For example, if $x\in \Omega_i \cap \Omega_j$ and $i \neq j$, then both $T_{\Omega_i,x}$ and $T_{\Omega_j,x}$ exist, and can be different.

A note on notation is that we denote the interior of a manifold $\Omega_i$ by $\interior \Omega_i$, and the boundary by $\partial \Omega_i$.

\subsection{Types of singularities}\label{sec:singularities}

The following are what we will refer to as singular points, which will be of four different kinds. Given $x \in \Omega = \cup \Omega_i$, we have the following types:

\newlist{Types}{enumerate}{10}
\setlist[Types]{label=\textbf{(Type \arabic*)},ref=Type \arabic*,leftmargin=*}
\begin{Types}
  \item \label{it:type1}  There is a submanifold $\Omega_i$ such that $x \in \partial \Omega_i$.
  \item \label{it:type2} There are submanifolds $\Omega_i \neq \Omega_j$ such that $x \in \interior \Omega_i \cap \interior \Omega_j$.
  \item \label{it:type3} There are submanifolds $\Omega_i \neq \Omega_j$ such that $x\in \partial \Omega_i \cap \interior \Omega_j$.
  \item \label{it:type4} There are submanifolds $\Omega_i \neq \Omega_j$ such that $x \in \partial \Omega_i \cap \partial \Omega_j$.
\end{Types}

The different types above can of course have non-empty intersection with each other, and a \emph{non-singular} point is simply a point $x\in \interior \Omega_i$ such that if $j \neq i$, then $x \not \in \Omega_j$. See \cref{fig:XV-config} for two examples of singularities.

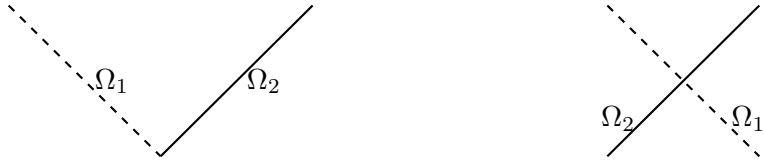
\begin{figure}[!ht]
  \begin{minipage}{0.45\textwidth}
    \centering
    \begin{tikzpicture}
      \draw[dashed,thick] (-2,2) -- (0,0);
      \draw[thick] (0,0) -- (2,2);
      \node[anchor=west] at (-1,1) {$\Omega_1$};
      \node[anchor=west] at (1,1) {$\Omega_2$};
    \end{tikzpicture}
  \end{minipage}
  \begin{minipage}{0.45\textwidth}
    \centering
    \begin{tikzpicture}[scale=0.5]
      \draw[dashed,thick] (-2,2) -- (2,-2);
      \draw[thick] (-2,-2) -- (2,2);
      \node[anchor=east] at (-1,-1) {$\Omega_2$};
      \node[anchor=west] at (1,-1) {$\Omega_1$};
    \end{tikzpicture}
  \end{minipage}
  \caption{There is a singularity in the intersection of the lines above. The left figure shows a point of~\ref{it:type4}, and the right figure shows a point of~\ref{it:type2}.}
  \label{fig:XV-config}
\end{figure}

\subsubsection{$(L,r)$-regular manifolds}
To formulate our results for general non-flat manifolds, we need a quantitative measure of how regular, with regard to curvature, our set $\Omega$ is.

Recall (see \cref{sec:integration}) that for any $x_0 \in \interior \Omega_i$, there is a neighborhood $W \subset \Omega_i$ of $x_0$ on which the orthogonal projection $\pi$ to the affine tangent plane $x_0 + T_{\Omega_i, x_0}$ is a diffeomorphism onto its image.

\begin{definition}\label{def:Lr-regular}
  Let $\Omega = \bigcup_i \Omega_i$ be a finite union of compact $d$-dimensional submanifolds in $\R^N$. We say that $\Omega$ is \emph{$(L,r)$-regular} if for each $\Omega_i$, and every $x_0 \in \interior \Omega_i$ with $\dist(x_0, \partial \Omega_i) \geq r$, the orthogonal projection $\pi: W \to x_0 + T_{\Omega_i, x_0}$ is smooth in a neighborhood $W$ of $x_0$ in $\Omega_i$ with $B_r(x_0) \cap \Omega_i \subset W$, and for all $y \in W$:
  \begin{equation}\label{eq:manifold:bound1}
    \norm{y - \pi(y)} \leq L \norm{x_0 - \pi(y)}^2,
  \end{equation}
  and
  \begin{equation}\label{eq:manifold:bound2}
    |V(D(\pi^{-1})(\pi(y))) - 1| \leq L \norm{x_0 - \pi(y)}^2,
  \end{equation}
  where $V$ is the volume function from \cref{sec:integration}.
\end{definition}

\begin{remark}
  Condition~\eqref{eq:manifold:bound1} bounds the deviation of $\Omega_i$ from its tangent plane at $x_0$, while condition~\eqref{eq:manifold:bound2} bounds the distortion of the volume element under the chart. Both are second-order conditions that hold automatically for smooth manifolds with bounded curvature.
\end{remark}

\begin{example}
  Any smooth compact submanifold $\Omega_i \subset \R^N$ is $(L,r)$-regular for some $L, r > 0$. This follows from compactness: the bounds~\eqref{eq:manifold:bound1} and~\eqref{eq:manifold:bound2} hold locally at each point in $\interior \Omega_i$, and compactness allows us to extract uniform constants.
\end{example}

\subsection{Graph Laplacian}
In this section we introduce the graph Laplacian and how it acts on real-valued functions defined on $\R^N$.

Given  $n$ i.i.d.~random samples $X = \{X_1,\ldots,X_n\}$ from the distribution with density $p$ on $\Omega$, we build a weighted fully connected graph $G=(V,E)$ as follows: We let each sample $X_i$ represent a vertex $i$, and for vertices $i,j \in V $ the weight on $(i,j) \in E$ is given by
\begin{equation*}
  W_{n,t}(i,j) := W_{n,t}(X_i,X_j) = \frac{1}{n} K_t(X_i,X_j) = \frac{1}{n} {\e}^{-\frac{\|X_i - X_j\|^2}{t}}.
\end{equation*}
The function $W_{n,t}$ is naturally viewed as an $n\times n$ matrix, and the variable $t$ is in the literature often referred to as the \emph{bandwidth} of the \emph{kernel} $K_t$.
\begin{remark}
  In the limit analysis as $n\to \infty$, it is useful also normalize by $\frac{1}{t^{\frac{d+1}{2}}}$. But, since a priori we do not know the dimension $d$, we will work without this normalization.
\end{remark}
We define the diagonal weighted degree matrix as
\begin{equation*}
  D_{n,t}(i,i) = \sum_j W_{n,t}(i,j),
\end{equation*}
and the \emph{graph Laplacian} $L_{n,t}$ as
\begin{equation*}
  L_{n,t} = D_{n,t} - W_{n,t}.
\end{equation*}
\begin{remark}
  This is often referred to as the \emph{unnormalized graph Laplacian}. There are other normalizations of this matrix which are used, for example, in~\cite{shi2000normalized,ng2001spectral,lafon2004diffusion}. One difference between these normalizations are their limit properties.
\end{remark}
Given the fully connected graph $G = (V,E)$, the graph Laplacian above can be seen as an operator acting on arbitrary functions $f:V \to \R$ in the following way:
\begin{equation*}
  L_{n,t} f(X_i) = \frac{1}{n} \sum_{j} K_t(X_i,X_j) (f(X_i) - f(X_j)), \quad (X_i,X_j) \in E.
\end{equation*}
We extend this operator to acting on functions that live in the ambient space $f:\R^N \to \R$ by the canonical choice
\begin{equation} \label{eq:Lnt}
  L_{n,t} f(x) = \frac{1}{n} \sum_{j} K_t(x,X_j) (f(x) - f(X_j)), \quad x \in \R^N.
\end{equation}

Several of our results will be stated in terms of the expected operator:
\begin{equation}
  \label{eq:Lt}
  L_t f(x) = \E_p[L_{n,t}f(x)] = \int_\Omega K_t(x,y) (f(x) - f(y)) p(y)\dy.
\end{equation}
Further, an immediate consequence of the linearity of the integral is that
\begin{equation}\label{eq:Lt_sum}
  \begin{split}
    L_{t}f(x) & = \int_\Omega K_t(x,y)(f(x)-f(y))p(y)\dy                  \\
              & = \sum_i \int_{\Omega_i} K_t(x,y)(f(x)-f(y))p(y) \dy.
  \end{split}
\end{equation}
This is useful to us, and in our approach we work with the \emph{restricted Laplacian} $L_t^i$, which we define as
\begin{equation} \label{eq:restricted_laplacian}
  L_{t}^if(x) = \int_{\Omega_i} K_t(x,y)(f(x)-f(y))p(y)\dy.
\end{equation}

\subsection{The class of functions}
As mentioned in the introduction, we apply the graph Laplacian to linear functions of the form $f(x) = v \cdot x$, where $v$ is a unit vector, and all our results are stated with this assumption.
\subsection{A concentration inequality}
As stated earlier, many of our results pertain to the expected operator $L_{t}$. To connect these results to the operator $L_{n,t}$, we prove the following concentration inequality:
\begin{theorem}\label{thm:finite:sample}
  Let $X_1,\ldots,X_n$ be i.i.d.~samples from a density $p$ on the union of manifolds $\Omega = \cup \Omega_i$. Then
  \begin{equation*}
    \P \left ( \max_i \left |L_{n,t}f(X_i) - \frac{n-1}{n} L_t f(X_i) \right | > \epsilon \right ) \leq 2 n \exp\left ( -\frac{4\e (n-1) \epsilon^2}{t} \right ).
  \end{equation*}
\end{theorem}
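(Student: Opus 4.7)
The plan is to condition on $X_i$, treat $L_{n,t}f(X_i)$ as a normalized sum of i.i.d.\ bounded random variables, apply Hoeffding's inequality, and finish with a union bound over $i$. First, fix an index $i$ and condition on the value of $X_i$; the remaining samples $\{X_j\}_{j\neq i}$ are then i.i.d.\ from $p$, and I would rewrite
\begin{equation*}
L_{n,t}f(X_i) \;=\; \frac{1}{n}\sum_{j\neq i} Y_j, \qquad Y_j := K_t(X_i,X_j)\bigl(f(X_i) - f(X_j)\bigr),
\end{equation*}
since the $j=i$ summand is automatically zero. A direct computation using the definition~\eqref{eq:Lt} of $L_t$ gives $\E[Y_j \mid X_i] = L_t f(X_i)$, so that the conditional mean of $L_{n,t}f(X_i)$ is exactly $\tfrac{n-1}{n}L_tf(X_i)$, which is precisely the centering appearing in the statement.

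Second, I would derive a uniform almost sure bound on $|Y_j|$ by exploiting the linearity of $f$ together with the Gaussian decay of the kernel. Because $v$ is a unit vector, $|f(X_i)-f(X_j)| = |v\cdot(X_i-X_j)| \le \|X_i-X_j\|$, so with $u := \|X_i-X_j\|$,
\begin{equation*}
|Y_j| \;\le\; u\, e^{-u^2/t} \;\le\; \sup_{u\ge 0} u\, e^{-u^2/t} \;=\; \sqrt{\tfrac{t}{2e}},
\end{equation*}
the supremum being attained at $u=\sqrt{t/2}$. This is the essential input: it converts the Gaussian kernel weight together with the Lipschitz continuity of $f$ into a clean uniform bound of order $\sqrt{t}$, independent of the ambient dimension and of the particular value of $X_i$.

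Third, conditional on $X_i$ the variables $\{Y_j\}_{j\neq i}$ are i.i.d., bounded by $\sqrt{t/(2e)}$, and have mean $L_tf(X_i)$, so Hoeffding's inequality applied to the average $\tfrac{1}{n}\sum_{j\neq i}Y_j$ of $n-1$ such terms yields, after absorbing the $1/n$ scaling, an inequality of the form
\begin{equation*}
\P\!\left(\bigl|L_{n,t}f(X_i) - \tfrac{n-1}{n}L_tf(X_i)\bigr| > \epsilon \,\big|\, X_i\right) \;\le\; 2\exp\!\left(-\tfrac{c(n-1)\epsilon^2}{t}\right)
\end{equation*}
for an explicit constant $c$ coming from $\sqrt{t/(2e)}$. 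Integrating out $X_i$ and taking a union bound over $i=1,\dots,n$ then produces the extra factor $n$ and gives the advertised inequality.

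The only delicate point I foresee is the exact bookkeeping of the constant $4e$ in the exponent: it is mildly sensitive to how the factor $\tfrac{1}{n}$ is absorbed into Hoeffding, to whether one uses the naive uniform bound $\sqrt{t/(2e)}$ or a slightly sharper variance/Bernstein refinement, and to the $n$ vs.\ $n-1$ discrepancy. Conceptually, however, the proof is a routine conditional concentration argument, and I do not expect any substantive obstacle beyond tracking constants.
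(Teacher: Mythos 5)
Your proposal is correct and follows essentially the same route as the paper: union bound over $i$, conditioning on $X_i$ (the paper reduces to $X_1$ by exchangeability and uses the tower property), the uniform bound $\sup_{u\ge 0} u e^{-u^2/t} = \sqrt{t/(2e)}$ on the summands, and a conditional application of Hoeffding's inequality. The only bookkeeping you leave open — how the factor $\tfrac{n}{n-1}$ and the bound $\sqrt{t/(2e)}$ combine to give the constant $4e$ — is handled in the paper by keeping $C_n = \tfrac{n}{n-1} \ge 1$ explicit inside Hoeffding and then discarding it.
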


This concentration can be improved if we have some further information about the manifold. For instance, if we know that the manifold is flat, and we know the dimension. This allows us to get tight control over the variance of the graph Laplacian, which is what improves the concentration, but we do not explore this further.

\section{Main results}

Now that we have the necessary definitions and mathematical background, we are ready to present our main results. Before stating the theorems, we will provide a brief section that explains the geometry of some terms that will be used.

\subsection*{General structure of results}
By \cref{eq:Lt_sum} it is enough to understand the restricted Laplacian, $L^i_t$ defined in \cref{eq:restricted_laplacian}. Because of this, our results are formulated to show the behavior of $L^i_{t}$. Depending on what type of singularity being examined, it is straight forward to extend the results to the full Laplacian. In \cref{cor:intersection} we give one example  of how to extend the results to the sum $\sum_{i=1}^2 L^i_t$ when one is close to an intersection of two manifolds.

\subsubsection*{Geometry and notation for \cref{sec:flat}}\label{sec:geom_notation}
We will in several theorems also formulate the function $x \to L_t^i f(x)$ partly in terms of new coordinates $(r,\theta)$. Here $r$ is defined by the relation $\norm{x-x_0} = \sqrt{t}r$, and given the projection $\hat x$ of $x$ to the tangent plane of the flat manifold $\Omega_i$, we define $\theta \in [0,\pi/2]$ to be the angle between vectors $x-x_0$ and $\hat x - x_0$, as in the schematic in \cref{fig:thm2}. By simple geometry, it follows that $\norm{\hat x - x} = r\sqrt{t}\sin \theta$.

Given a vector $v \in \R^N$, we will have reason to write the expression $ v \cdot (x - \hat x)$ as
\begin{equation*}
  v \cdot (x - \hat x) = r\sqrt{t}\sin\theta \, v \cdot \frac{x - \hat x}{\norm{x - \hat x}} = r\sqrt{t}\sin\theta \, v_{n,\Omega_i}(x),
\end{equation*}
where we have defined
\begin{equation*}
v_{n,\Omega_i}(x) \coloneqq
\begin{cases}
v \cdot \frac{x - \hat{x}}{|x - \hat{x}|} & \text{if } x \neq \hat{x} \\
0 & \text{if } x = \hat{x}.
\end{cases}
\end{equation*}
Here $v_{n,\Omega_i}$ is the projection of $v$ onto a unit normal vector of $\Omega_i$, and which can vary with $x$. We will later use a natural coordinate system parallel to $\Omega_i$, see \cref{sec:integration}, such that when $x\neq \hat x$, this function is constant up to its sign. This implies that evaluating $ r\sqrt{t}\sin\theta \, v_{n,\Omega_i}(x)$ is the same as letting $v_{n,\Omega_i}$ be fixed, but allowing $\theta$ to change sign depending on which side of $\Omega_i$ $x$ is, i.e.~as if we have fixed the coordinate system in which we measure the angle $\theta$. In the following, to increase readability, we suppress the $x$-dependency of $v_{n,\Omega_i}$.

Additionally, in \cref{thm:explicit:boundary} we will have a term $v_{n,\partial \Omega_i}$ that is specific to that theorem. This will be defined in the case where there is a boundary close to $x$. In \cref{fig:thm2}, this would imply there is a boundary of $\Omega_1$ nearby. To give the definition of this term first assume that $\Omega_i$ is a flat manifold and assume that close to the point $x$, $\partial \Omega_i$ is flat. Then, we first let $\hat x_{\partial \Omega_i}$ be the projection of $\hat x$ to $\partial \Omega_i$. We can now define a unit normal at $\hat x_{\partial \Omega_i}$, denoted by  $n_{\partial \Omega_i}$. Two choices are natural, a normal pointing either towards, or away from $\Omega_i$. We define $n_{\partial \Omega_i}$ as the latter. Given a vector $v \in \R^N$, we define
\begin{equation*}
  v_{n,\partial \Omega} \coloneqq  v \cdot n_{\partial \Omega}.
\end{equation*}
Since we assume that close to $x$, $\partial \Omega$ is flat, $n_{\partial \Omega}$ can be considered constant.

\begin{figure}[ht]
  \begin{tikzpicture}[scale=2]
    \draw[gray, thick] (-1,0) -- (4,0);
    \draw[gray, thick] (0.5,-1) -- (2.5,1);
    \filldraw (1.5,0) circle (0.05) node[below] {$x_0$};
    \filldraw (2.5,1) circle (0.05) node[right] {$x$};
    \draw[dashed] (2.5,0) -- (2.5,1) node[midway,right]{$\sin \theta r \sqrt{t}$};
    \filldraw (2.5,0) circle (0.05) node[below] {$\hat x$};
    \draw[dashed] (1.5+0.4,0) arc (0:45:0.4);
    \draw (1.5+0.5,0+0.2) node{$\theta$};

    \draw[dashed] (1.5,0) -- (1.5,1.41) node[midway]{$r \sqrt{t}$};
    \draw[dashed] (1.5,0) circle (1.41);

    \draw (0.5,0) node[above]{$\Omega_1$};
    \draw (0.5,-1+0.1) node[above]{$\Omega_2$};

    \draw[dashed] (3,-1.41) arc (-45:45:2);
    \draw[dashed] (0,-1.41) arc (45+180:-45+180:2);
    \draw[dashed] (1.5,0) -- (1.5-1.41,1.41) node[midway]{$r_0 \sqrt{t}$};
  \end{tikzpicture}
  \caption{Schematic picture of the geometry of \cref{thm:explicit:intersection}, where $\Omega_1$ is the object of interest and $x \in \Omega_2$ for visualization purposes.}
  \label{fig:thm2}
\end{figure}
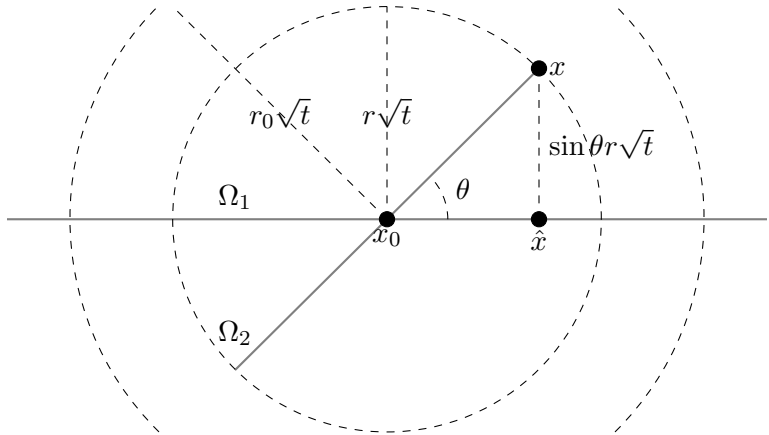

\subsubsection*{Geometry and notation for \cref{sec:general}}\label{sec:general_geom_notation}
To help with the geometric picture for general manifolds, the situation is as explained in \cref{sec:geom_notation} and \cref{fig:thm2}: the terms $x,x_0, \hat x, \theta$ and $v_{n,\Omega_i}$ are in the same relation to each other as in \cref{sec:geom_notation}. The main difference is that the tangent space now changes with $x$ as the manifold is no longer flat. In that sense the geometry for more general manifolds is not more difficult, but handling error terms is more involved. When considering multiple manifolds, we write $\theta_i$ (or $\theta_i(x)$ when emphasizing the dependence on $x$) for the angle $\theta$ computed with respect to the tangent plane of $\Omega_i$.

\subsection{Flat manifolds}\label{sec:flat}

In this section, in addition to the assumptions presented in \cref{sec:manifolds}, we assume that each $\Omega_i$ is a flat manifold, i.e. a $d$ dimensional compact linear submanifold of $\R^N$.
This implies that each coordinate chart around a point $x \in \interior \Omega_i$ is an isometry with an open neighborhood $U$ of $x$, where $U$ is a ball in $\R^d$.

In \cref{thm:explicit:intersection} we give a result concerning the behavior of $x\to L_t^if(x)$ when we are \emph{not} close to the boundary $\partial \Omega$. This case is easier to prove, and we give explicit bounds of all terms involved, and express them with elementary functions.

In \cref{thm:explicit:boundary} we show what happens when we are close to $\partial \Omega$, and here our results contain more complex expressions for some terms.

In the following theorems, we think of $x_0$ as the potentially singular point, see \cref{fig:thm2}. The appearance of terms different from zero either comes from the fact that we are evaluating the restricted operators outside of their corresponding manifolds, or because we are close to the boundary. For example, if $\Omega = \Omega_1 \cup \Omega_2$ and $x_0 \in \Omega_1 \cap \Omega_2$, the extra contribution thus comes from each $L_{t}^i$, $i \in \{1,2\}$, when evaluated on the other manifolds, i.e. $L_t^1 f(x)$ when $x \in \Omega_2$ and vice versa. By combining \cref{thm:explicit:intersection} and \cref{thm:explicit:boundary}, it is possible to consider several types of singularities defined in \cref{sec:singularities}.

\begin{theorem}
  \label{thm:explicit:intersection}
  Let $x_0 \in \Omega_i$ and assume that $\partial \Omega_i \cap B_{2R}(x_0) = \emptyset$ for $R = r_0 \sqrt{t}$, where $r_0 > \max\{2, \sqrt{2d/3}\}$. Further, $x \in B_{R}(x_0)$, and $v_{n,\Omega_i}$, $r$ and $\theta$ are as described in \cref{sec:geom_notation}. If $d \geq 1$ and $r<r_0/2$, then we have that
  \begin{equation*}
    L^i_t f(x) = t^{\frac{d+1}{2}}\left(A(d,r_0,\theta) v_{n,\Omega_i}\sin\theta r{\e}^{-\sin^2\theta r^2} + B(x){\e}^{-r_0^2}\right),
  \end{equation*}
  where $A,B$ are real-valued functions. The function $B$ depends on $x$, but it is uniformly bounded by $|B(x)| \leq \frac{d+1}{4} r_0^{d} p|\mathbb{S}^{d-1}|$; and $A$ depends on $x$ only through $\theta$, and is bounded by
  \begin{equation*}
    \frac{p}{2}\pi^{d/2} \leq A(d,r_0,\theta)\leq p\pi^{d/2}.
  \end{equation*}
\end{theorem}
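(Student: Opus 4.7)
The plan is a direct Gaussian moment computation, leveraging that $\Omega_i$ is flat. For $y$ in the affine tangent plane $T \supset \Omega_i$, the Pythagorean decomposition $\|x-y\|^2 = \|x-\hat x\|^2 + \|\hat x - y\|^2 = tr^2\sin^2\theta + \|\hat x - y\|^2$ and the splitting $v\cdot(x-y) = v\cdot(x-\hat x) + v\cdot(\hat x - y)$ together let me factor the kernel and reduce $L_t^i f(x)$ to
\begin{equation*}
L_t^i f(x) = p_i\, e^{-r^2\sin^2\theta}\bigl( v\cdot(x-\hat x)\, I_1 + I_2 \bigr),
\end{equation*}
where $I_1 = \int_{\Omega_i} e^{-\|\hat x - y\|^2/t}\dy$, $I_2 = \int_{\Omega_i} e^{-\|\hat x - y\|^2/t}\, v\cdot(\hat x - y)\dy$, $p_i$ is the (uniform) density value on $\Omega_i$, and $v\cdot(x-\hat x) = -r\sqrt{t}\sin\theta\,v_{n,\Omega_i}$ by the definition of $v_{n,\Omega_i}$.

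I would next rescale via $u = (y - \hat x)/\sqrt{t}$ to bring both integrals into standard Gaussian form on a rescaled domain $D \subset \R^d$. The hypothesis $\partial\Omega_i\cap B_{2R}(x_0) = \emptyset$, combined with $\|\hat x - x_0\|\leq \|x-x_0\|\leq R$, first ensures $\hat x \in \Omega_i$ and then forces $D \supset B_{r_0}(0)$. Using the full Gaussian moments $\int_{\R^d} e^{-\|u\|^2}\dvar{u} = \pi^{d/2}$ and $\int_{\R^d} e^{-\|u\|^2}\,u\,\dvar{u} = 0$, the leading behaviour of $I_1$ is $t^{d/2}\pi^{d/2}$ and that of $I_2$ is $0$ (the latter by antisymmetry after projecting $v$ onto $T$); the deviation of the true values from these leading terms is controlled by tail integrals over $\R^d\setminus D$, which are themselves dominated by tails over $\R^d\setminus B_{r_0}(0)$. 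The $v\cdot(x-\hat x)\,I_1$ contribution then produces the main term $A\cdot t^{(d+1)/2}\,v_{n,\Omega_i}\sin\theta\, r\, e^{-r^2\sin^2\theta}$, and the residual from $I_2$ --- together with whichever $I_1$-truncation piece one absorbs here --- gives the error term $B\cdot t^{(d+1)/2}e^{-r_0^2}$.

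The explicit constants follow from two elementary tail estimates. The bound $A\leq\pi^{d/2}$ is immediate from $\int_D e^{-\|u\|^2}\dvar{u}\leq\pi^{d/2}$; the matching lower bound uses $\int_D e^{-\|u\|^2}\dvar{u}\geq\pi^{d/2}-|\mathbb{S}^{d-1}|\int_{r_0}^\infty s^{d-1}e^{-s^2}\dvar{s}$ followed by a monotonicity bound on $s^{d-1}e^{-s^2}$: splitting $e^{-s^2} = e^{-s^2/2}e^{-s^2/2}$, maximising $s^{d-1}e^{-s^2/2}$ on $[r_0,\infty)$ (which, together with the hypothesis $r_0>2$ and a standard Gaussian tail bound on $\int_{r_0}^\infty e^{-s^2/2}\dvar{s}$, produces the constant $2^{d/2}r_0^{d-1}e^{-r_0^2+1}$), with the $\max$ in the statement ensuring non-triviality uniformly in $d$. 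For $B$, integration by parts gives $\int_{r_0}^\infty s^d e^{-s^2}\dvar{s} = \tfrac{1}{2}r_0^{d-1}e^{-r_0^2} + \tfrac{d-1}{2}\int_{r_0}^\infty s^{d-2}e^{-s^2}\dvar{s}$, and iterating under $r_0>2$ yields a bound of order $\tfrac{d+1}{4}r_0^{d-1}e^{-r_0^2}$; combining with $|v|\leq 1$, $r<r_0$, and $p_i=1/|\Omega_i|$ produces the stated bound on $|B|$.

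The main obstacle is bookkeeping. The Gaussian manipulation itself is routine, but arranging the remainder so that $A$ depends on $x$ only through $\theta$ --- rather than through the full position of $\hat x$, which determines the shape of $D$ --- forces one to route the $D$-vs-$B_{r_0}(0)$ discrepancy into $B$ and then verify that what is left for $A$ admits the clean form claimed. Hitting the precise constants in the lower bound on $A$ and the uniform bound on $|B|$ requires the tail estimates to be carried out carefully; the hypotheses $r_0>2$ and $t\leq R^2/(d/2+1)$ ensure these estimates are applied in a regime in which they are valid.
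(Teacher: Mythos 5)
Your proposal is correct, and its computational core is the same as the paper's: the Pythagorean factorization $e^{-\|x-y\|^2/t}=e^{-r^2\sin^2\theta}e^{-\|\hat x-y\|^2/t}$ valid on a flat manifold, the splitting $v\cdot(x-y)=v\cdot(x-\hat x)+v\cdot(\hat x-y)$ with the second piece controlled by odd symmetry, and incomplete-gamma (Gaussian tail) bounds for the remainders. The one genuine difference is where the truncation happens. The paper first splits $\Omega_i$ into $B_R(x)\cap\Omega_i$ and its complement, estimates the far field as the $e^{-r_0^2}$ error via $\Gamma\left(\tfrac{d+1}{2},r_0^2\right)$, and only then factors the kernel on the near part; since $B_R(x)\cap\Omega_i=B_\delta(\hat x)\cap\Omega_i$ with $\delta^2=R^2-tr^2\sin^2\theta$, the main integral runs over a ball whose radius is determined by $(r,\theta)$ alone, so the coefficient $A=\tfrac12|\mathbb{S}^{d-1}|p\,\gamma(d/2,\delta^2/t)$ has the claimed dependence automatically and the tangential piece vanishes exactly by symmetry of the ball. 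You instead factor over all of $\Omega_i$ and compare to full-space Gaussian moments, so your domain $D$ depends on the position of $\hat x$ inside $\Omega_i$, not just on $\theta$; as you correctly identify, you must then route the $D$-versus-$B_{r_0}(0)$ discrepancy into $B$, which in effect sets $A=\pi^{d/2}$ (trivially within the stated bounds, making your tail-based lower bound on $A$ superfluous) and adds an extra exponentially small contribution to $B$. A quick check shows this extra piece is of size at most about $\tfrac{d}{8}r_0^{d-1}|\mathbb{S}^{d-1}|e^{-r_0^2}/|\Omega_i|$, so the total still fits under the stated $\tfrac{d+1}{4}r_0^{d}|\mathbb{S}^{d-1}|/|\Omega_i|$. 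Both routes are valid: the paper's ball-first truncation buys the explicit $\gamma$-function form of $A$ and an exactly vanishing tangential term, while yours buys a cleaner leading constant at the cost of a marginally larger (but still admissible) error coefficient.
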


The following theorem is an extension of \cref{thm:explicit:intersection} to the case when the ball $B_R(x_0) \cap \partial \Omega_i \neq \emptyset$, which gives rise to an additional term in the expression of $L_t^if(x)$. We again refer to the schematic picture of \cref{fig:thm2} and comments in \cref{sec:geom_notation} for explanation of the coordinates $(r,\theta)$, function $v_{n,\Omega_i}$ and constant $v_{n,\partial \Omega_i}$.
\begin{theorem}\label{thm:explicit:boundary}
  Let $x_0 \in \Omega_i$ and assume that $\partial \Omega_i \cap B_{2R}(x_0)$ is part of a $d-1$ dimensional plane for $R = r_0 \sqrt{t}$, where $r_0 > \max\{2, \sqrt{2d/3}\}$. Further, $x \in B_{R}(x_0)$, and $v_{n,\Omega_i}$, $v_{n,\partial \Omega_i}$, $r$ and $\theta$ are as described in \cref{sec:geom_notation}. If $d \geq 1$ and $r<r_0/2$, then we have that
  \begin{align*}
    L^i_t f(x) & =\widehat{A}_1(x) t^{\frac{d+1}{2}} v_{n,\Omega_i} \sin\theta\, r {\e}^{-\sin^2\theta\, r^2} + \widehat{A}_2(x) t^{\frac{d+1}{2}} v_{n,\partial \Omega_i} {\e}^{-\sin^2\theta\, r^2} \\
               & \quad + B(x)t^{\frac{d+1}{2}} {\e}^{-r_0^2},
  \end{align*}
  for explicitly computable function $\widehat{A}_2$, and with explicitly computable bounds of function $\widehat{A}_1$. The function $B$ has the same bounds as in \cref{thm:explicit:intersection}.
\end{theorem}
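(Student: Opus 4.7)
The plan is to parallel the proof of \cref{thm:explicit:intersection}, with the one essential difference that the integration now runs over a flat half-space rather than over all of $\R^d$, which forces an extra boundary contribution to appear. Concretely, I would choose orthonormal coordinates $(y_1,\dots,y_d)$ inside the flat tangent plane of $\Omega_i$, with $y_1$ along $-n_{\partial\Omega_i}$ (i.e.\ pointing into $\Omega_i$) and $y_2,\dots,y_d$ spanning $\partial\Omega_i$, with origin at the projection of $\hat x$ onto $\partial\Omega_i$. In these coordinates $\hat x$ becomes $(s,0,\dots,0)$, where $s$ is the signed distance from $\hat x$ to $\partial\Omega_i$ (positive when $\hat x\in\Omega_i$), and by the hypothesis $\Omega_i\cap B_{2R}(x_0)$ coincides with $H\cap B_{2R}(x_0)$ for the half-space $H=\{y_1\geq 0\}$.

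Using the orthogonal decomposition $\|x-y\|^2=\|x-\hat x\|^2+\|\hat x-y\|^2 = tr^2\sin^2\theta + \|\hat x-y\|^2$ and splitting $v\cdot(x-y)=v\cdot(x-\hat x)+v\cdot(\hat x-y)$, the prefactor $e^{-r^2\sin^2\theta}$ pulls out and the problem reduces to evaluating
\begin{equation*}
\frac{e^{-r^2\sin^2\theta}}{|\Omega_i|}\biggl[r\sqrt t\sin\theta\,v_{n,\Omega_i}\!\!\int_{\Omega_i}\!\! e^{-\|\hat x-y\|^2/t}\,dy+\!\!\int_{\Omega_i}\!\! e^{-\|\hat x-y\|^2/t}\,v\cdot(\hat x-y)\,dy\biggr].
\end{equation*}
I would then replace each integral over $\Omega_i$ by the corresponding integral over the full half-space $H$; the errors from $\Omega_i\setminus B_{2R}(x_0)$ and $H\setminus B_{2R}(x_0)$ combine into $B(x)t^{(d+1)/2}e^{-r_0^2}$ by the same tail estimate as in \cref{thm:explicit:intersection} (using $\|x-y\|\geq \tfrac12\|y-x_0\|$ whenever $\|y-x_0\|\geq 2R$), producing a uniform bound on $B(x)$ of the same shape.

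The two half-space integrals can then be handled by Fubini. The transverse $(y_2,\dots,y_d)$-integrations give a common factor $\pi^{(d-1)/2}t^{(d-1)/2}$, and $\int_0^\infty e^{-(y_1-s)^2/t}\,dy_1 = \tfrac{\sqrt{\pi t}}{2}(1+\erf(s/\sqrt t))$, yielding $\widehat A_1(x) = \frac{\pi^{d/2}(1+\erf(s/\sqrt t))}{2|\Omega_i|}$ up to $O(e^{-r_0^2})$ corrections absorbed into $B$; this $\widehat A_1$ is bounded between $0$ and $\pi^{d/2}/|\Omega_i|$ and recovers the coefficient of \cref{thm:explicit:intersection} as $s/\sqrt t\to\infty$. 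In the second integral, the $(y_2,\dots,y_d)$-components of $v\cdot(\hat x-y)$ vanish by odd symmetry, leaving only $-v_{n,\partial\Omega_i}\int_0^\infty e^{-(y_1-s)^2/t}(s-y_1)\,dy_1 = \tfrac{t}{2}v_{n,\partial\Omega_i}e^{-s^2/t}$. Multiplying by the transverse factor and the $e^{-r^2\sin^2\theta}/|\Omega_i|$ prefactor produces the new boundary term $\widehat A_2(x)\,t^{d/2}v_{n,\partial\Omega_i}e^{-\sin^2\theta r^2}$ with the explicitly computable $\widehat A_2(x)=\frac{\pi^{(d-1)/2}\sqrt t}{2|\Omega_i|}e^{-s^2/t}$.

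The main obstacle I anticipate is bookkeeping rather than anything analytically subtle: one must verify that $s$ can be treated as signed (so the argument covers $\hat x\notin\Omega_i$ uniformly), that the sign conventions for $v_{n,\Omega_i}$ and $v_{n,\partial\Omega_i}$ from \cref{sec:geom_notation} are compatible with the chosen coordinate frame, and that the tail corrections from extending $\Omega_i$ to $H$ can be absorbed into the same $B(x)t^{(d+1)/2}e^{-r_0^2}$ with a bound no worse than in \cref{thm:explicit:intersection}. The genuinely new ingredient is the $\widehat A_2$ term, which appears precisely because the translation invariance along $y_1$ that made the $v\cdot(\hat x-y)$ integral vanish in \cref{thm:explicit:intersection} is broken by the boundary.
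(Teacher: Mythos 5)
Your proposal is correct, but it takes a genuinely different route from the paper. The paper never extends the domain to a half-space: it keeps the integration region exactly equal to $B_R(x)\cap\Omega_i = B_\delta(\hat x)\cap\Omega_i$ (a $d$-ball missing a spherical cap), introduces cylindrical coordinates $(h,\varrho,\varphi)$ with $h$ along $n_{\partial\Omega_i}$, integrates out the transverse directions to get a lower incomplete gamma $\gamma\bigl(\tfrac{d-1}{2},\delta_0^2-h^2\bigr)$, and then handles the remaining $h$-integral by integration by parts. This yields $\widehat A_2$ exactly and two-sided bounds on $\widehat A_1$ in terms of incomplete gamma functions of $\delta_0$ and $k_0$, with no enlargement of the tail term $B(x)e^{-r_0^2}$. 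Your route instead pays a (still $O(e^{-r_0^2})$, explicitly controllable) price in the tail term to replace the truncated cap by the full half-space, after which Fubini gives clean closed forms: $\erf(s/\sqrt t)$ for $\widehat A_1$ and a pure Gaussian $e^{-s^2/t}$ for $\widehat A_2$. The two answers agree in the limit $r_0\to\infty$ (with $s=-K$, $k_0=K/\sqrt t$, the paper's $\widehat A_2$ tends to $\tfrac{\pi^{(d-1)/2}}{2}e^{-k_0^2}$, matching your Gaussian up to the density factor $p=1/|\Omega_i|$, which the paper suppresses). Both satisfy the theorem as stated, since it only requires "explicitly computable" coefficients; yours buys elementary formulas and a transparent interpretation of the boundary term as a broken odd symmetry, at the cost of a slightly worse (but same-shape) constant in $B(x)$ — so the clause "the same bounds as in \cref{thm:explicit:intersection}" would need the constant adjusted.

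One small point worth making explicit: your closed form carries a factor $\sqrt t$ inside $\widehat A_2$, so the boundary contribution genuinely scales as $t^{(d+1)/2}$ rather than $t^{d/2}$. This is the dimensionally consistent scaling (the integrand $v\cdot(\hat x-y)$ contributes $O(\sqrt t)$ on the effective support, on top of the $t^{d/2}$ mass), and it is what the paper's own rescaling of the $III$-integral produces once the extra factor of $h$ is tracked; it is compatible with the theorem statement only because $\widehat A_2$ is left unconstrained there. Your caution about sign conventions for $v_{n,\Omega_i}$ and $v_{n,\partial\Omega_i}$ is warranted but is pure bookkeeping, as you say.
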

\begin{remark}
  The function $\widehat{A}_1$ is bounded by
  \begin{equation*}
    \frac{p}{2\delta_0}\left({\e}^{-k_0^2}\logamma{\frac{d-1}{2}}{\delta_0^2-k_0^2}-{\e}^{-\delta_0^2}\frac{2(\delta_0^2-k_0^2)^{\frac{d-1}{2}}}{d-1}\right) \leq \widehat{A}_1 \leq p\Gamma\left(\frac{d-1}{2}\right)\sqrt{\pi}
  \end{equation*}
  and $\widehat{A}_2$ is given by
  \begin{equation*}
    \widehat{A}_2 =  \frac{p|\mathbb{S}^{d-2}|}{2}\left({\e}^{-\delta_0^2}\frac{(\delta_0^2 - k_0^2)^{(d-1)/2}}{d-1} +\frac{1}{2}{\e}^{-k_0^2}\gamma\left(\frac{d-1}{2},\delta_0^2-k_0^2\right)\right),
  \end{equation*}
  where $\Gamma(\cdot)$ is the \emph{Gamma function}, and $\gamma(\cdot,\cdot)$ is the \emph{incomplete lower Gamma function}, see \cref{A:gamma} for details.
  To define $k_0$ and $\delta_0$, we recall the geometric picture of \cref{sec:geom_notation}. Then $K$ is the projection of $(\hat x - \hat x_{\partial \Omega_i})$ to $n_{\partial \Omega_i}$, $k_0 = K/\sqrt{t}$, and $\delta_0 = \sqrt{r_0^2 - r^2\sin^2 \theta}$.
\end{remark}

\subsection{General manifolds}\label{sec:general}
In this section we will not assume the manifolds are flat, instead we assume that $\Omega$ is $(L,2R)$-regular, see~\cref{def:Lr-regular}. The type of singularity we deal with for a more general manifold will be a~\ref{it:type2}, and we will assume we are not too close to any boundary. Although we achieve a similar result as in \cref{thm:explicit:intersection}, the constants depend on the curvature of the manifold which in many practical applications would be unknown. This comes from the non-local nature of the $t$-regularized Laplacian $L_t$. It is however useful if we are in a situation where some a-priori information about the curvature is known, for instance if we are interested in exploring level sets of certain functions.

\begin{theorem}[General manifold]\label{thm:general}
  Let $x_0 \in \Omega_i$ and assume that $\partial \Omega_i \cap B_{2R}(x_0) = \emptyset$ for $R = r_0 \sqrt{t}$, where $r_0 > \max\{2, \sqrt{2d/3}\}$. Further, $x \in B_{R}(x_0)$, and $v_{n,\Omega_i}$, $r$ and $\theta$ are as described in \cref{sec:geom_notation}. If $d \geq 1$ and $r<r_0/2$, then we have that
  \begin{equation*}
    \begin{split}
      L_t^if(x) = t^{\frac{d+1}{2}} \widehat A(x)v_{n,\Omega_i}r\sin \theta {\e}^{-r^2 \sin^2 \theta}+ t^{\frac{d+2}{2}} \widetilde{C}_{L,r_0}(x)+ D(x){\e}^{-r_0^2}.
    \end{split}
  \end{equation*}
  In the above, $\widehat A > 0$ is a function such that
  \begin{equation*}
    |A(d,r_0,\theta) - \widehat A(x)| \leq (1+4Lr_0^2 t)A(d,r_0,\theta)
  \end{equation*}
  where $A(d,r_0,\theta)$ as in \cref{thm:explicit:intersection};  $\widetilde{C}_{L,r_0}$ is a function such that
  \begin{equation*}
    |\widetilde{C}_{L,r_0}(x)|\leq 2p\pi^{d/2}\left((8L r_0^4 + 16L^2 r_0^5 \sqrt{t})\, {\e}^{8L r_0^3 \sqrt{t} + 16L^2 r_0^4 t} + L r_0^2 (1+4L r_0^2 t)\right)
  \end{equation*}
  and $|D(x)| \leq \diam(\Omega)$.
\end{theorem}

The next lemma gives useful bounds on $L^i_tf(x)$ when $x$ is non-singular.
\begin{lemma}\label{lem:error}
  Given the conditions of \cref{thm:general} and the additional assumption that $x \in \Omega_i$, we have that
  \begin{equation*}
    L_t^i f (x)  = t^{\frac{d+2}{2}} E_{L,r_0}(x) + D(x){\e}^{-r_0^2},
  \end{equation*}
  where $E_{L,r_0}$ is a function satisfying
  \begin{equation*}
    |E_{L,r_0}(x)| \leq \widehat A(x) L r_0^2 + |\widetilde{C}_{L,r_0}(x)|.
  \end{equation*}
\end{lemma}

\begin{remark}
  The result in \cref{lem:error} can be used together with both \cref{thm:explicit:intersection} and \cref{thm:general} to analyze the behavior of the mapping $x \to L_tf(x)$ around intersections.
\end{remark}
The following corollary is a consequence of \cref{thm:general,lem:error}.
The geometry is as in \cref{sec:general_geom_notation}, projecting $x$ specifically to the tangent plane $T_{\Omega_1,x_0}$.
\begin{corollary}\label{cor:intersection}
  Let $x_0 \in  \Omega_1 \cap \Omega_2$ and assume that $\partial \Omega_i \cap B_{2R}(x_0) = \emptyset$ for $i \in \{1,2\}$ and $R = r_0 \sqrt{t}$, where $r_0 > \max\{2, \sqrt{2d/3}\}$. Assuming $R$ and $t$ satisfies the conditions of \cref{thm:general}, then for $x \in B_R(x_0)\cap \Omega_2$ such that $\|x - x_0\| = r \sqrt{t}$ for $r < r_0/2$, we have that
  \begin{equation*}
    \begin{split}
      L_tf(x) & =  t^{\frac{d+1}{2}}\widehat A(x)v_{n,\Omega_1}r\sin \theta_1 {\e}^{-r^2 \sin^2 \theta_1} \\
              & \qquad + t^{\frac{d+2}{2}} E_{L,r_0}(x) + 2D(x){\e}^{-r_0^2}
    \end{split}
  \end{equation*}
  where $E_{L,r_0}$ is a function satisfying
  \begin{equation*}
    |E_{L,r_0}(x)| \leq \widehat A(x) L r_0^2 + 2|\widetilde{C}_{L,r_0}(x)|.
  \end{equation*}
  In the above, $\theta_1$ and $v_{n,\Omega_1}$ are as in \cref{sec:general_geom_notation}, with $\Omega_i = \Omega_1$. Functions $\widehat A, \widetilde{C}_{L,r_0}$ and $D$ are as in \cref{thm:general}.
\end{corollary}

\subsection{Manifolds with noise}\label{sec:noise}
In earlier sections, we have assumed that the samples used to evaluate $L_{n,t}f(x)$ are taken directly from $\Omega$. However, in many real-world applications we can only expect that we access the points $X_1, X_2, \dots X_n$ approximately, for instance due to measurements uncertainties. In this section we propose a scenario where our results can have application also in this more general case. For instance, we could have the following situation:

\begin{enumerate}
    \item We first obtain a set of observations $x_1, x_2, \dots x_n$ of $X_1, X_2, \dots, X_n$ from $\Omega$.
    \item We cannot observe these points directly, but instead we are able to take multiple, noisy, measurements from the random variables $x_1 + \epsilon_1, x_2 + \epsilon_2, \dots x_n + \epsilon_n$, where each $\epsilon_i \sim \mathcal{N}(0,\sigma^2 I)$ \label{noisy}
\end{enumerate}

To model this is, we replace the operator
\begin{equation*}
  L_{n,t}f(x) = \frac{1}{n}\sum_{j=1}^n K_t(x,X_j)(f(x)-f(X_j)),
\end{equation*}
by
\begin{equation*}
  L_{n,t,\epsilon}f(x) = \frac{1}{n}\sum_{j=1}^n K_t(x,X_j+\epsilon_j)(f(x)-f(X_j+\epsilon_j)).
\end{equation*}
The following theorem shows that with our results, and with regards to the second source of randomness, the uncertainty in measurements, we can understand the expectation of $L_{n,t,\epsilon}$.
\begin{theorem}[Stochastic version]\label{thm:noisy}
  Let $L_{n,t,\epsilon}$ be as above, and the operator $\E_{\epsilon}[\,\cdot \,] = \E[\, \cdot \mid X_1,\dots,X_N]$ be expectation with regard to the random variables $(\epsilon_1,\dots, \epsilon_n)$. Then
  \begin{equation*}
    \E_{\epsilon}  L_{n,t,\epsilon}f(x) =  \frac{t^{N/2+1}}{(2\sigma^2 + t)^{N/2+1}} \frac{1}{n}\sum_{j=1}^n K_{2\sigma^2 + t}\left(x,X_j\right) (f(x)-f(X_j)).
  \end{equation*}
\end{theorem}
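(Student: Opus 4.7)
The plan is to reduce the whole computation to a single Gaussian-convolution identity, applied term by term. By linearity of $\E_\epsilon$ and the independence of the $\epsilon_j$'s, one has
\begin{equation*}
\E_\epsilon L_{n,t,\epsilon}f(x) = \frac{1}{n}\sum_{j=1}^n \E_{\epsilon_j}\bigl[K_t(x,X_j+\epsilon_j)(f(x)-f(X_j+\epsilon_j))\bigr],
\end{equation*}
so it suffices to evaluate one generic term. Writing the expectation as an integral against the Gaussian density of $\epsilon_j$ and changing variables to $y = X_j+\epsilon_j$, this becomes
\begin{equation*}
I_j := \int_{\R^N} K_t(x,y)\,(f(x)-f(y)) \cdot \frac{1}{(2\pi\sigma^2)^{N/2}}\, e^{-\|y-X_j\|^2/(2\sigma^2)}\,dy.
\end{equation*}

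The main step is the classical Gaussian product identity. Viewed as a function of $y$, the product $e^{-\|x-y\|^2/t}\cdot e^{-\|y-X_j\|^2/(2\sigma^2)}$ is again Gaussian: completing the square in $y$ yields
\begin{equation*}
e^{-\|x-y\|^2/t}\, e^{-\|y-X_j\|^2/(2\sigma^2)} = K_{t+2\sigma^2}(x,X_j)\, e^{-B\|y-\mu\|^2},
\end{equation*}
with effective precision $B = \tfrac{1}{t}+\tfrac{1}{2\sigma^2} = \tfrac{t+2\sigma^2}{2\sigma^2 t}$ and effective mean $\mu = \tfrac{2\sigma^2 x + t X_j}{t+2\sigma^2}$. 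Integrating the remaining Gaussian over $\R^N$ produces a factor $(\pi/B)^{N/2}$, which after combining with the normalizer $(2\pi\sigma^2)^{-N/2}$ collapses to the clean prefactor $(t/(t+2\sigma^2))^{N/2}$. This already gives $\E_{\epsilon_j}[K_t(x,X_j+\epsilon_j)] = (t/(t+2\sigma^2))^{N/2}\, K_{t+2\sigma^2}(x,X_j)$.

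To finish, I would exploit the linearity of $f$ by writing $f(x)-f(y) = (f(x)-f(X_j)) - v\cdot(y-X_j)$. Under the tilted Gaussian with density proportional to $e^{-B\|y-\mu\|^2}$, the mean of $y$ is exactly $\mu$, and a direct algebraic step gives $\mu - X_j = \tfrac{2\sigma^2}{t+2\sigma^2}(x-X_j)$, so the $v\cdot(y-X_j)$ contribution equals $\tfrac{2\sigma^2}{t+2\sigma^2}(f(x)-f(X_j))$ multiplied by the same Gaussian prefactor already computed. The two pieces then combine with net coefficient $1 - \tfrac{2\sigma^2}{t+2\sigma^2} = \tfrac{t}{t+2\sigma^2}$, giving
\begin{equation*}
I_j = \frac{t^{N/2+1}}{(t+2\sigma^2)^{N/2+1}}\, K_{t+2\sigma^2}(x,X_j)\,(f(x)-f(X_j)).
\end{equation*}
Summing over $j$ and dividing by $n$ yields the stated identity. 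There is no genuine conceptual obstacle here; the only point requiring care is the asymmetry between the factor $1/(2\sigma^2)$ in the noise density and the factor $1/t$ in the kernel, which is precisely what produces the combination $t+2\sigma^2$ throughout.
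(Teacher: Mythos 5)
Your proof is correct and follows essentially the same route as the paper: complete the square to merge the kernel with the noise density, integrate out the resulting Gaussian, and use the linearity of $f$ together with the mean of the tilted Gaussian to collect the factor $t/(t+2\sigma^2)$. Note that your final constant $t^{N/2+1}/(2\sigma^2+t)^{N/2+1}$ agrees with what the paper's own computation produces but differs from the stated theorem by the factor $2$ in the numerator; that factor appears to be a typo in the statement, not a gap in your argument.
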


Thus, for $t' = 2\sigma^2+t$ and up to normalization, $\E_{\epsilon}L_{n,t,\epsilon}$ is the operator $L_{n,t'}$.

\section{Hypothesis test for singularities in flat manifolds}\label{sec:hypothesis}

In this section we will develop a hypothesis test for singularities in flat manifolds. We begin by assuming that $\Omega = \cup_i \Omega_i$ is a union of flat manifolds, and that we have a set of samples $\{X_1,\ldots,X_n\}$ from $p$. In this section we wish to consider the following hypothesis for a fixed point $x_0 \in \Omega$:
\begin{itemize}
  \item $H_0$: $B_{2}(x_0) \cap \Omega = B_{2}(x_0) \cap \Omega_i$ for some $i$ and $B_{2}(x_0) \cap \partial \Omega_i = \emptyset$.
  \item $H_1$: Hypothesis $H_0$ is false.
\end{itemize}

\begin{theorem} \label{thm:hypothesis}
  Let $\Omega = \cup \Omega_i \subset \R^N$ be a union of flat manifolds, and $X_1,\dots,X_n$ be i.i.d.~samples from $p$. Define the statistic
  \begin{equation*}
    T = \max_{m: X_m \in B_1(x_0)} |L_{n,t} f(X_m)|,
  \end{equation*}
  and for a fixed test-level $\alpha \in (0,1)$ the rejection region $\Theta := \{T > \delta\}$, where
  \begin{equation*}
    \delta = \sqrt{\frac{t}{\e(n-1)}\log \left (\frac{2n}{\alpha} \right )}.
  \end{equation*}
  Let $C_d = \frac{p(d+1)|\mathbb{S}^{d-1}|}{4} + 1$, where $d$ is the intrinsic dimension of $\Omega$. If
  \begin{equation*}
    t < \min \left \{\frac{1}{4}, \frac{3}{2d}, \frac{2}{\log \left (\frac{4C_d^2\e(n-1)}{\log \left (\frac{2n}{\alpha} \right ) }\right )}  \right \},
  \end{equation*}
  and we reject $H_0$ in the rejection region $\Theta$, then this is a test of level $\alpha$.
\end{theorem}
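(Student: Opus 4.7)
The plan is to combine a deterministic estimate of $|L_t f(X_m)|$ under $H_0$, obtained from \cref{thm:explicit:intersection}, with the concentration inequality of \cref{thm:finite:sample}, and then verify that the stated constraint on $t$ makes the two bounds compatible. The geometric content of $H_0$ is that every sample $X_m \in B_1(x_0)$ lies in the interior of a single flat piece $\Omega_i$, and no other component $\Omega_j$ enters $B_{2R}(x_0)$; both facts will force $L_t f(X_m)$ to be exponentially small in $r_0^2 = R^2/t$.

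For the deterministic step, decompose $L_t f(X_m) = \sum_j L_t^j f(X_m)$. Because $X_m \in \interior \Omega_i$ with $\Omega_i$ flat, its projection onto the tangent plane satisfies $\hat X_m = X_m$, hence $v_{n,\Omega_i}(X_m) = 0$ by definition. Applying \cref{thm:explicit:intersection} with $x = X_m$, the leading term $A\, v_{n,\Omega_i}\sin(\theta) r e^{-\sin^2\theta\, r^2}$ vanishes and only $L_t^i f(X_m) = t^{(d+1)/2} B(X_m) e^{-r_0^2}$ survives, with $|B|$ uniformly bounded. For $j \neq i$, the disjointness in $H_0$ gives $\|X_m - y\| \geq R$ for every $y \in \Omega_j$, hence $K_t(X_m,y) \leq e^{-r_0^2}$; combined with $|f(X_m) - f(y)| \leq \diam(\Omega)$, this yields $|L_t^j f(X_m)| \leq e^{-r_0^2}\diam(\Omega)$. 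Summing over $j$ produces a uniform deterministic bound $|L_t f(X_m)| \leq C_{\Omega,d}\, e^{-r_0^2}$.

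For the probabilistic step, apply \cref{thm:finite:sample} at tolerance $\delta/2$:
\begin{equation*}
\P\!\left(\max_m \left|L_{n,t}f(X_m) - \tfrac{n-1}{n} L_t f(X_m)\right| > \tfrac{\delta}{2}\right) \leq 2n\exp\!\left(-\tfrac{e(n-1)\delta^2}{t}\right),
\end{equation*}
and observe that the stated choice $\delta = \sqrt{t\log(2n/\alpha)/(e(n-1))}$ makes the right-hand side exactly $\alpha$. On the complementary event, the triangle inequality combined with the deterministic bound gives, for every $X_m \in B_1(x_0)$,
\begin{equation*}
|L_{n,t}f(X_m)| \leq \tfrac{\delta}{2} + \tfrac{n-1}{n}|L_t f(X_m)| \leq \tfrac{\delta}{2} + C_{\Omega,d}\, e^{-r_0^2},
\end{equation*}
and hence $T \leq \delta/2 + C_{\Omega,d}\, e^{-r_0^2}$.

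The main obstacle is the final bookkeeping: one must verify that the stated upper bound on $t$ is tight enough to force $C_{\Omega,d}\, e^{-r_0^2} \leq \delta/2$, so that $T \leq \delta$ on the good event. Using $t \leq 1$ together with the geometry of $H_0$ (ensuring $r_0^2$ is at least of order $1/t$ for the natural choice of $R$), this reduces to an inequality of the form $e^{-2/t} \leq c\, t\log(2n/\alpha)/(n-1)$, which, after taking logarithms and absorbing $C_{\Omega,d}$ into the constant, is precisely what the bound $t \leq 2/\log(e(n-1)/(2\log(2n/\alpha)))$ encodes. With this in place, $\P_{H_0}(T > \delta) \leq \alpha$, so the rejection region $\Theta$ defines a test of level $\alpha$.
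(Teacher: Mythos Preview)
Your proposal is correct and follows essentially the same route as the paper: bound $|L_t f(X_m)|$ deterministically under $H_0$ using \cref{thm:explicit:intersection} (the $v_{n,\Omega_i}$ term vanishes since $X_m=\hat X_m$) together with the trivial $e^{-r_0^2}$-type bound for the far-away components $\Omega_j$, then invoke \cref{thm:finite:sample} at tolerance $\delta/2$ and combine via the triangle inequality. The only cosmetic difference is that the paper fixes $R=2$ (so $r_0=2/\sqrt{t}$) at the outset, whereas you leave this as ``the natural choice of $R$''; making that choice explicit is exactly what turns your $e^{-r_0^2}$ into $e^{-c/t}$ and lets the stated $t$-constraint do its job.
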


\begin{remark}
  Under the null hypothesis that $\Omega_i$ is $(L,R)$-regular, we can use the same test statistic $T$, but we change our rejection region by redefining $\delta$ using \cref{thm:general}. This $\delta$ will necessarily be larger and thus if we wish to detect singularities in more general manifolds, we need more samples to achieve the same power.
\end{remark}

As with all hypothesis tests we can only safely reject the null hypothesis, but why we reject it is not clear. Looking at \cref{thm:explicit:boundary} it could either be that we have a singularity of the types~\ref{it:type1}--\ref{it:type4} if we assume that all submanifolds are flat. However, the reason for rejection could also be that the manifold is not flat. What we will outline next is a way to estimate the power of the test in \cref{thm:hypothesis} in the case when the alternative hypothesis is that there is an intersection between two flat manifolds. Similar results can be obtained for a singularity of any of the types~\ref{it:type1}--\ref{it:type4}.

\begin{theorem} \label{thm:power}
  Assume $\Omega = \Omega_1 \cup \Omega_2$ and otherwise as in \cref{thm:hypothesis}.
  Let $H_1'$ be an alternative hypothesis that there is an intersection at $x_0$ between $\Omega_1$ and $\Omega_2$ where both are flat manifolds of intrinsic dimension $d$, where $x_0$ is a singular point of~\ref{it:type2} and $\partial \Omega_i \cap B_{2}(x_0) = \emptyset$, $i=1,2$. Let $T$ be the test statistic in \cref{thm:hypothesis} and $\delta$ be as in \cref{thm:hypothesis}. Then the probability of rejecting $H_0$ under $H_1'$ (\emph{power}) is bounded by
  \begin{equation*}
    \P(T > \delta \mid H_1') \geq 1-\alpha-\P(X_m \notin \mathcal{A})^n,
  \end{equation*}
  where $\mathcal{A} = (B_{1/2}(x_0) \setminus B_{1/4}(x_0)) \cap \Omega_2$,
  when $t$ satisfies the condition in \cref{thm:hypothesis},
  and $n$ is so large that
  \begin{equation} \label{eq:power}
    \begin{split}
      t \left ((d+1)\log(1/t) + \log \left (\frac{512}{p^2\pi^{d} |v_{n,\Omega_1}|^2} \right ) \right ) & \leq 2(1-\sin^2 \theta_1),
      \\
      t                                                                                            & \leq \frac{2}{\log \left (\frac{1024 {\e}}{p^2\pi^{d} |v_{n,\Omega_1}|^2 \sin^2 \theta_1}\right )}.
    \end{split}
  \end{equation}
\end{theorem}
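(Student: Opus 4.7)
The proof combines two high-probability events via a union bound: the finite-sample concentration from \cref{thm:finite:sample}, and the event that at least one sample lies in $B_{R/3}(x_0)$. On the intersection of these events, the explicit expression for the expected Laplacian near an intersection (\cref{thm:explicit:intersection}) forces the test statistic $T$ above the threshold $\delta$.

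First I would apply \cref{thm:finite:sample} with $\epsilon = \delta/2$. A direct substitution using the definition of $\delta$ shows that the failure probability of the concentration bound equals exactly $\alpha$. Let $\mathcal{E}_C$ denote the event that concentration holds, so that $\P(\mathcal{E}_C) \geq 1-\alpha$ and on $\mathcal{E}_C$ one has $|L_{n,t}f(X_m) - \tfrac{n-1}{n}L_t f(X_m)| \leq \delta/2$ for every $m$. By independence of the samples, the probability that no $X_m$ lies in $B_{R/3}(x_0)$ equals $\P(X_m \in B^c_{R/3}(x_0))^n$. A union bound then gives $\P(\mathcal{E}_C \cap \mathcal{E}_B) \geq 1 - \alpha - \P(X_m \in B^c_{R/3}(x_0))^n$, where $\mathcal{E}_B$ is the event that at least one $X_m$ lies in $B_{R/3}(x_0)$.

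Next I would argue that on $\mathcal{E}_C \cap \mathcal{E}_B$ one has $T > \delta$. Pick any sample $X_m \in B_{R/3}(x_0)$ and, without loss of generality, assume $X_m \in \Omega_2$ (the case $X_m \in \Omega_1$ being symmetric). Splitting $L_t f = L_t^1 f + L_t^2 f$ and applying \cref{thm:explicit:intersection} twice, with $\Omega_i = \Omega_2$ (where $X_m$ lies, so the angle vanishes) and with $\Omega_i = \Omega_1$ at the original $x_0$, the first contribution reduces to the exponentially small term $t^{(d+1)/2}B(X_m)e^{-r_0^2}$, while the second equals
\begin{equation*}
  t^{(d+1)/2}\left(A(\theta_1,r_0,d)\, v_{n,\Omega_1}\sin\theta_1\cdot r e^{-r^2\sin^2\theta_1} + B(X_m)e^{-r_0^2}\right),
\end{equation*}
with $r = \|X_m - x_0\|/\sqrt{t}$ and $\theta_1$ the angle between $X_m - x_0$ and $T_{\Omega_1,x_0}$. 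The uniform lower bound $A \geq \tfrac{1}{2}\pi^{d/2}$ from \cref{thm:explicit:intersection} yields an explicit lower bound on $|L_t f(X_m)|$, and whenever this bound strictly exceeds $\tfrac{3}{2}\delta\cdot n/(n-1)$, the concentration event $\mathcal{E}_C$ gives $|L_{n,t}f(X_m)| > \delta$, hence $T > \delta$.

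The main obstacle is the algebraic verification that the two conditions in \eqref{eq:power} do force the singular main term above $\tfrac{3}{2}\delta\cdot n/(n-1)$ uniformly over the admissible positions of $X_m$ in $B_{R/3}(x_0)$. Writing $u = r\sin\theta_1$, this amounts to comparing $t^{(d+1)/2}\pi^{d/2}|v_{n,\Omega_1}|\, u e^{-u^2}$ against $\delta = \sqrt{t\log(2n/\alpha)/(e(n-1))}$; squaring, taking logarithms, and isolating powers of $t$ produces precisely \eqref{eq:power}, with the first inequality keeping the Gaussian exponent $u^2$ bounded and the second matching the polynomial factor $t^{(d+1)/2}$ to the $\delta$-scale. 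The delicate point is uniformity: samples $X_m$ that land very close to $x_0$ have small $r$ and shrink the main term, and the particular calibration in \eqref{eq:power} is what absorbs this worst case.
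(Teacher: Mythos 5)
Your overall architecture (concentration event plus a per-sample geometric event, combined by a union bound, with \cref{thm:explicit:intersection} supplying the signal) matches the paper's, but there is a genuine gap in the step where you claim that on $\mathcal{E}_C\cap\mathcal{E}_B$ one has $T>\delta$. The event $\mathcal{E}_B$ only guarantees that some sample lies in $B_{R/3}(x_0)$, and such a sample can be arbitrarily close to $x_0$. For $X_m\in\Omega_2$ at distance $r\sqrt t$ from $x_0$ the main term is proportional to $u e^{-u^2}$ with $u=r\sin\theta_1$, which tends to $0$ as $r\to 0$; hence no calibration of $t$ through \eqref{eq:power} can force the signal above $\tfrac32\delta\, n/(n-1)$ \emph{uniformly} over $B_{R/3}(x_0)$. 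Since the sample density is uniform on a $d$-dimensional manifold, the outcome ``the only sample in $B_{R/3}(x_0)$ sits in a tiny neighbourhood of $x_0$'' has positive probability, so the implication $\mathcal{E}_C\cap\mathcal{E}_B\Rightarrow T>\delta$ is false. You flag exactly this as the delicate point, but the assertion that \eqref{eq:power} ``absorbs this worst case'' cannot be made to work: the two conditions in \eqref{eq:power} control the \emph{locations} of the two roots of $ue^{-u^2}=c$, not a uniform lower bound for $ue^{-u^2}$ near $u=0$.

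The paper's proof avoids this by defining the per-sample success event directly in terms of the value of the expected operator, namely $\{|L_t^1 f(X_m)|\chi_{B_1(x_0)\cap\Omega_2}(X_m)\geq 3\delta\}$, and then computing the probability of its complement. Writing $G(u)=ue^{-u^2}$ and inverting via the two branches $W_0$, $W_{-1}$ of the Lambert function, the complement decomposes into three events: the sample is too close to $x_0$ (small root, bounded via $-W_0(-\varrho)<e\varrho$), too far from $x_0$ (large root, bounded via $-W_{-1}(-\varrho)>\log(1/\varrho)$), or outside $B_1(x_0)\cap\Omega_2$. The two conditions in \eqref{eq:power} are precisely what push the small root below $1/4$ and the large root above $1/2$, so that the per-sample failure probability is controlled; independence then gives the $n$-th power. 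To repair your argument you would need to replace $\mathcal{E}_B$ by this annulus-type event (or directly by the event that the expected signal exceeds $3\delta$ at some sample), rather than by membership in $B_{R/3}(x_0)$.
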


\begin{remark}
  Asymptotically as $|v_{n,\Omega_1}|$ and $\sin \theta$ tends to zero, the number of samples needed to satisfy \cref{eq:power} is
  \begin{equation*}
    \frac{n}{\log n } = O\left (\frac{1}{|v_{n,\Omega_1}|^2 \sin^2 \theta_1}\right ).
  \end{equation*}
  As such we see that we drastically lose power if the angle $\theta_1$ is small, or if the normal vector $v_{n,\Omega_1}$ is small. This is intuitive, as the smaller $|v_{n,\Omega_1}|$ is, the less we see the intersection in the chosen direction $v$ and if it is zero, the Laplacian applied to $f$ will not see the intersection at all. On the other hand, if $\theta_1$ is small, the intersection becomes less distinct. When $\theta = 0$, the two manifolds are tangent, and from the perspective of the Laplacian (for small $t$), they are effectively the same manifold, as such there is no singularity.
  In a practical application we can search for a vector $v$ such that $T$ is maximized (on a subsample). Thus, the only real constraint is the actual angle of intersection between the manifolds.
\end{remark}

\section{Numerical Experiments}\label{sec:numerical}
\subsection{Hypothesis test for singularities}\label{sec:num:hypo}
In this section we will demonstrate the hypothesis test in \cref{thm:hypothesis} for a flat manifold. We will assume that we have a set of samples $(X_1,X_2,\ldots,X_n)$ distributed according to the uniform density on $\Omega$, and that we have a fixed point $x_0 \in \Omega$. We will assume that the null hypothesis $H_0$ is true, and that $x_0$ is not close to any boundary. Specifically we will assume that $N=3$ and that $\Omega$ is a piece of a hyperplane in $\R^3$.
We will then evaluate $L_{n,t}f(x)$ on $(X_1,X_2,\ldots,X_n)$ and calculate the test statistic $T$ as in \cref{thm:hypothesis}, with $t$ as in \cref{thm:power}. We will then calculate the rejection region $\Theta$ and see if $T$ is in $\Theta$. You will find the results in \cref{fig:hypothesis}.

To test the power of \cref{thm:power} we construct an intersection between two hyperplanes in $\R^3$ at different angles. The result is in \cref{fig:hypothesis}.

\begin{figure}[!ht]
  \begin{center}
    \begin{tabular}{|c||c||c|c|}
      \hline
      Samples          & Rejections under $H_0$ & Rejections under $H_1'$ ($\theta = \pi/4$) & $H_1'$ ($\theta = \pi/2$) \\
      \hline
      $2 \cdot 10^4$   & $0$                    & $0$                                        & $0.04$                    \\
      $3 \cdot 10^4$   & $0$                    & $0$                                        & $0.73$                    \\
      $4 \cdot 10^4$   & $0$                    & $0$                                        & $1$                       \\
      $5.5 \cdot 10^4$ & $0$                    & $0.2$                                      & $1$                       \\
      $6 \cdot 10^4$   & $0$                    & $0.28$                                     & $1$                       \\
      $6.5 \cdot 10^4$ & $0$                    & $0.57$                                     & $1$                       \\
      $7 \cdot 10^4$   & $0$                    & $0.81$                                     & $1$                       \\
      \hline
    \end{tabular}
  \end{center}
  \caption{Results of hypothesis test for singularities in flat manifolds, each experiment run 100 times.}
  \label{fig:hypothesis}
\end{figure}

\subsubsection{Neural networks} \label{sec:num:neural}
As mentioned in the introduction (see \cref{sec:neural}) we are interested in the zero set of a simple, non-trivial, neural network. Specifically, let us consider the zero set $\Omega_\delta$ of a neural network as in \cref{sec:neural}, where $k=3$ and the target function $f_{W^\ast}$ is chosen such that all three weights $w_i$ are the same, $a_1 = -a_2 = -a_3$ and $\delta = 10^{-16}$. The choice of $k>2$ ensures that the zero set $\Omega_\delta$ is large, as due to cancellations in the sum in \cref{eq:net_sets}, there should be many weights that produce the same function $f_{W^\ast}$. Note that the dimension of the weights $W$ is $2k=6$, in this example.

As described in \cref{sec:constraint}, the dataset we use is an approximation of
\begin{equation*}
  \widehat{\Omega}_\delta := \{W \in \mathbf{B}: |f_W(x_i) - f_{W^\ast}(x_i)| < \delta, \quad x_i \in \mathcal{D}  \},
\end{equation*}
where $\mathbf{B}$ is the box $[-10,10]^6$ and $\mathcal{D}$ is a dataset of $100$ points on the unit circle.
The method used to generate the dataset, as described in \cref{sec:constraint}, is called constraint propagation. It can be computationally intensive, but leads to a dataset of $12,753,597$ points $X$ with the following key properties:

\begin{enumerate}
\item Each point in $X$  is within a small distance ($0.005\sqrt{6}$) of a point in $\widehat{\Omega}_\delta$.
\item If we put a box around each point in $X$ with side-length $0.01$, then the union of these boxes will cover $\widehat{\Omega}_\delta$.
\end{enumerate}

A larger number of nodes ($k$) or a smaller side-length of the boxes leads to an increase in the computational cost.
Our particular choice of box-size ($0.01$) and $k=3$ were made to balance computational tractability, accuracy and the possibility of a complex enough zero set.

The zero set itself $\widehat{\Omega}_\delta$, by its construction and the piecewise linearity of the activation function, constitutes a union of flat (linear) manifolds, as highlighted in \cref{sec:flat}. 

In \cref{fig:neural} we have for illustration purposes projected a subsample of our $X$ of $\widehat{\Omega}_\delta$ to $\R^3$, using PCA, and this plot suggests that these manifolds are also intersecting. We wish to apply the hypothesis test in \cref{thm:hypothesis} to verify this.

We can see the result of applying the graph Laplacian evaluated on a subsample of $X$ to a linear function $f(x)=v \cdot x$ in \cref{fig:neural} ($v$ was chosen to maximize the absolute value of the operator $L_t f$). The dashed line is the rejection threshold from \cref{thm:hypothesis}, with parameter $t$ chosen according to \cref{thm:power}. The direction $v$ was chosen on an independently sampled subset of $X$ and the test performed on another independent subset of $30000$ samples. The conclusion is that we can reject the null hypothesis that $X$ is a union of flat manifolds that do not intersect at scale $R=1$ where $x_0 = 0$.

As the above hypothesis test rejects the null hypothesis, we can look closer at the behavior of $L_t f$ in the vicinity of $x_0$.
Note that, in the case where close to $x_0$, the manifold $\Omega = \cup_{i=1}^m \Omega_i$ is an intersection between flat manifolds but without their boundaries intersecting $B_2(x_0)$, see \cref{thm:explicit:intersection}. Then each expected operator $L_t^i$ will have the form
\begin{equation*}
  L_t^i f(x) = t^{\frac{d+1}{2}} \left ( A(d,r_0,\theta_i) v_{n,\Omega_i}\sin\theta_i r{\e}^{-\sin^2\theta_i r^2} + B(x){\e}^{-r_0^2} \right ).
\end{equation*}
If the vector $v$ found above, makes one of $|v_{n,\Omega_i}|$ large (recall the definition from \cref{sec:geom_notation}) and the others small, we would expect to see the first term in the above expression present in the full operator $L_t f$. Indeed, since $v \cdot x \approx \sin \theta_i r$ for $x$ close to $x_0$ the first term of this operator exhibits behavior similar to the function $y{\e}^{-y^2}$ with respect to $y=\sin \theta_i r$ for small enough $t$. By our choice of $v$, it is clear in the right panel of \cref{fig:neural} that such a term is present in the operator $L_t f$ if we compare with the ideal signal in \cref{fig:signal}. This is a strong indication that the zero set of the neural network close to $x_0$ contains an intersection of flat manifolds.

\begin{figure}[!ht]
  \centering
  \includegraphics[width=0.45\textwidth]{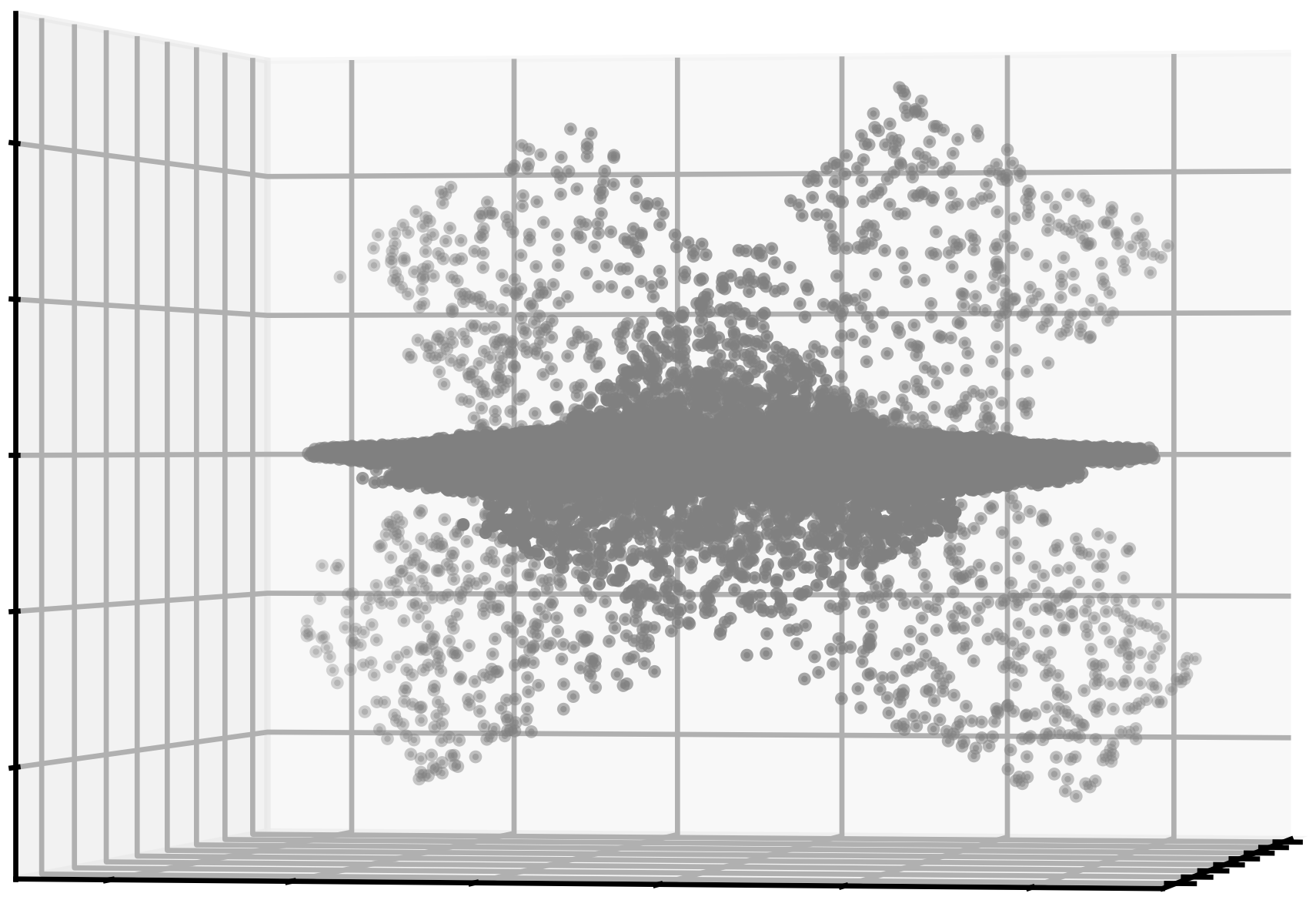}
  \includegraphics[width=0.45\textwidth]{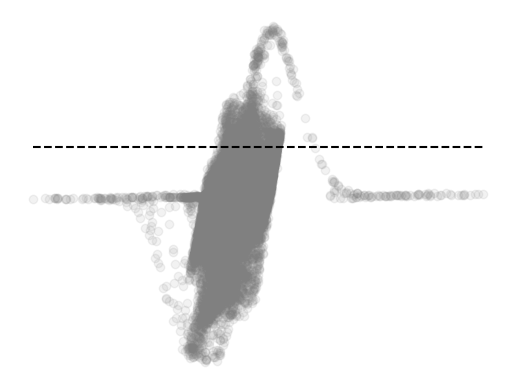}
  \caption{Two visualizations of neural network analysis: (left) PCA projection of a neural network's zero set from $\mathbb{R}^6$ to $\mathbb{R}^3$; (right) Graph Laplacian analysis showing $L_t f(x)$ on the $y$-axis and $f(x)$ on the $x$-axis, where $f(x) = v \cdot x$. The dashed line in the right plot represents the rejection region $\Theta$ specified in \cref{thm:hypothesis}.}
  \label{fig:neural}
\end{figure}

\subsection{Estimating singularities}\label{sec:num:sing}
In the following experiments we demonstrate how to estimate the location of an intersection and the intersecting angle $\theta$ of a union of two manifolds, given that we know that $\Omega = \Omega_1 \cup \Omega_2$, and $\Omega_1 \cap \Omega_2$ is non-empty and we have a~\ref{it:type2} singularity. Further, we assume that we have both a set of samples $X \subset \Omega$ distributed according to the associated density on $\Omega$,  and an additional set of points $Y$ from a curve $\Gamma \subset \Omega_i$, for some $i \in \{1,2\}$. The curve $\Gamma$ intersects $\Omega_1 \cap \Omega_2$, and we assume that no other singularity is very close, which is a situation like in \cref{fig:omega_flat} and \cref{fig:omega_curved}.

This particular setup could, for instance, be understood in the context of urban traffic flow analysis, where $\Omega_1$ and $\Omega_2$ represent two intersecting roads in a city, or some kind of interchange. In this scenario,
\begin{itemize}
    \item The set $X$ represents GPS data points from different vehicles traveling on the roads or interchange.
    \item The set  $Y \subset \Gamma$ represents samples from the path of a specific vehicle traveling through the intersection or interchange.
    \item The intersection $\Omega_1 \cap \Omega_2$ is the point of intersection between the roads, and the properties of $\Omega_1 \cap \Omega_2$ could be of interest for understanding traffic flow. 
\end{itemize}

\begin{figure}[!ht]
  \begin{tikzpicture}[scale=0.7]
    \begin{axis}[
        xmin = -3, xmax = 3,
        ymin = -0.5, ymax = 0.5,
        legend pos=north west]
      \addplot[
        domain = -3:3,
        samples = 200,
        smooth,
        thick,
        legend pos=north west,
      ] {x*exp(-x^2)};
      \legend{\(x {\e}^{-x^2}\)}
    \end{axis}
  \end{tikzpicture}
  \centering
  \caption{Graph of $y=x{\e}^{-x^2}$}
  \label{fig:signal}
\end{figure}
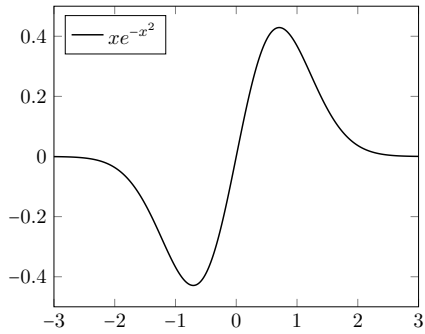

Now let $Y = \{y_1,\dots, y_m \} \subset \Gamma$ be as above. When we evaluate $x \mapsto L_{n,t}f(x)$ on $Y$, we get the values $P = \{p_1, \dots, p_m \ : \ p_i = L_{n,t}f(y_i), \ p_i \in \Gamma \}$. From our previous results, we know that these values, with enough samples, are close to 
\begin{equation}\label{eq:experiments}
  L_{t}f(y_i) = t^{\frac{d+1}{2}}\left(A(d,r_0,\theta_i) v_{n,\Omega_i}\sin\theta_i r_i{\e}^{-\sin^2\theta_i r_i^2}\right) + \text{Error}(y_i,r_0,L).
\end{equation}
The function depends on $r = \norm{y_i-x_0}$, and the angle  $\theta = \angle (y_i-x_0,\hat y_i-x_0)$, where $\hat y_i$ is the projection of $y_i$ onto either, in the flat case, a manifold $\Omega_i$ or, in the curved case, its tangent plane, as defined in \cref{thm:general}, for some point $x_0 \in \Omega_i$. The error term depends on $x$,$r_0$ and $L$. As before, the constant $L$ bounds the curvature of $\Omega$. The error term can be bounded using the results in \cref{sec:flat} and
\cref{sec:general}. By choosing a sufficiently small $t$, we can make $r_0$ arbitrarily large, causing the function $A(d,r_0,\theta)$ to approach $p\pi^{d/2}$. For our purposes in this section, we assume that $A(d,r_0,\theta) \approx p\pi^{d/2}$ and that $\text{Error}(x,r_0,L) \approx 0$ are good enough approximations.  With these assumptions,
\begin{equation}\label{eq:experiments:approx}
  L_{t}f(y_i) = t^{\frac{d+1}{2}}p\pi^{d/2}v_{n,\Omega_i}\sin\theta_i r_i{\e}^{-\sin^2\theta_i r_i^2}.
\end{equation}
\begin{remark}
  \cref{thm:general} uses $\widehat A(x)$ instead of $A(d,r_0,\theta)$, but $\widehat A(x)$ can be made arbitrarily close to $A(d,r_0,\theta)$ by choosing $t$ or $L$ small enough. How small we can make $L$, however, depends on how well-behaved the manifold is.
\end{remark}

We will use the functional form of equation \cref{eq:experiments:approx} and our two sets of samples to get estimators to the location and the angle of intersection of our manifold. We do not give explicit probabilistic bounds here, but assume $L_{n,t}f$ is sufficiently close to $L_{t}$.

\begin{remark}
    Let us note that a similar concentration bound as in \cref{thm:finite:sample} should be possible for this type of scenario, which would allow us to get explicit bounds for these experiments. That and not using the approximations above, would allow us to get explicit probabilistic intervals for all estimators.
\end{remark}

\subsubsection{Estimators for Intersection Location and Angle}
In this section we derive estimators for the location of the intersection and angle of intersection, based on \cref{eq:experiments:approx}.

Let $g(r,\theta) = v_{n,\Omega_i}r \sin \theta {\e}^{-r^2\sin^2 \theta}$. We observe that $g$ only depends on $x = r \sin \theta$ (up to the sign of $v_{n,\Omega_i}$), allowing us to write $g(r,\theta) = g(x)$, with $g(x)$ being a rescaled version of the function $h(x) = x{\e}^{-x^2}$. See \cref{fig:signal} for an illustration of $h$.

\sloppy The function $h$ attains its minimum and maximum values at $z_{1}=-\frac{1}{\sqrt{2}}$ and $z_2 = \frac{1}{\sqrt{2}}$ respectively. The intersection corresponds to the midpoint of these two points. We can estimate the point $s$ where $\Gamma$ intersects $\Omega_1 \cap \Omega_2$ as:
\begin{equation*}
  \hat{s} = \frac{\arg\max_{x_i}(P) + \arg\min_{x_i}(P)}{2}.
\end{equation*}

\sloppy To estimate the intersection angle $\hat \theta$ of $\theta$, we first need $\hat r_{\max}$, an estimate of the scaled distance from 0 that maximizes $g(r,\theta)$. This is
\begin{equation*}
\hat r_{\max} = \frac{\norm{\hat{s}-\arg\max_{x_i}P}}{\sqrt{t}}.
\end{equation*}
Since $\max_r g(r,\theta) = \frac{1}{\sqrt{2}}$, we have that $\hat r_{\max}\sin \theta \approx \frac{1}{\sqrt{2}}$. Solving for $\theta$, this leads to our estimator:
\begin{equation*}
  \hat{\theta} = \arcsin \left(\frac{1}{\sqrt{2}\hat r_{\max}}\right).
\end{equation*}

\subsubsection{Experimental setup}
We tested our estimation methods on hypersurfaces in $\mathbb{R}^3$ intersecting at angles given by $\theta \in \{\pi/2,\pi/4,\pi/8, \pi/16\}$, and with $t=10^{-3}$. For both flat and curved manifolds, we conducted 100 runs with random chosen functions of $f(x) = v\cdot x$, where $v$ is sampled from the uniform distribution on $\mathbb{S}^{2}$.

For each run:

\begin{itemize}
    \item We sampled $2\times 10^4$ points, from both $\Omega_1$ and $\Omega_2$, in a bounded region near the intersection
    \item We evaluated $L_{n,t}$ on $10^3$ uniformly spaced points  on $\Gamma$ to obtain our set $P$.
\end{itemize}

\begin{remark}
  We chose $\mathbb{R}^3$ mainly for visualization purposes. The intrinsic dimension of $\Omega$ remains low, regardless of the ambient dimension. However, choosing a function $f(x) = v\cdot x$ for $L_t^i$ to act on becomes more challenging in higher dimensions, as it becomes harder to orient $v$ such that $v_{n,\Omega_i}$ is large, especially when $v$ is chosen randomly, as in our experiments.
\end{remark}

In the following sections we present our results for each type of manifold and angle of intersection.
\subsubsection{Results for flat manifolds}
In this section, we apply our estimation methods to the case that $\Omega_1,\Omega_2$ are flat. Since the manifolds are flat, the angle $\theta_i = \theta$, where $\theta \in [0,\frac{\pi}{2}]$ is fixed.

In \cref{fig:omega_flat} we see our samples of $\Omega$ and $\Gamma$, and in \cref{fig:signal_flat} we see an example of the values we get in $P$. Finally, in \cref{fig:estimates_flat} we see how well this approach works in trying to learn both $\theta$ and $s$.

\begin{figure}[ht]
  \includegraphics[width=1\textwidth]{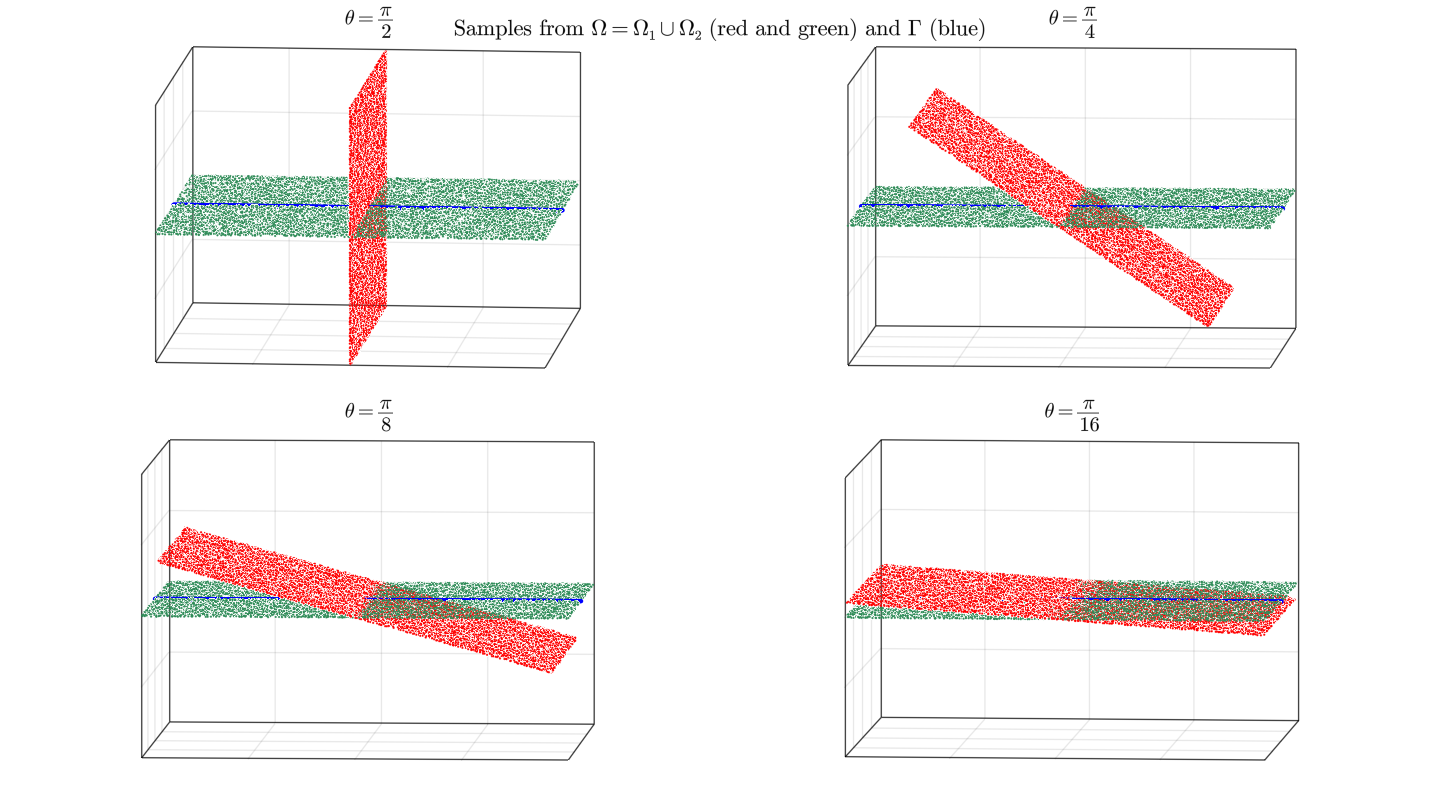}
  \centering
  \caption{Samples of $\Omega = \Omega_1 \cup \Omega_2$ and $\Gamma$ (blue), where $\Omega_1$ (green) and $\Omega_2$ (red) have no curvature.}
  \label{fig:omega_flat}
\end{figure}

\begin{figure}[ht]
  \centering
  \includegraphics[width=0.9\textwidth]{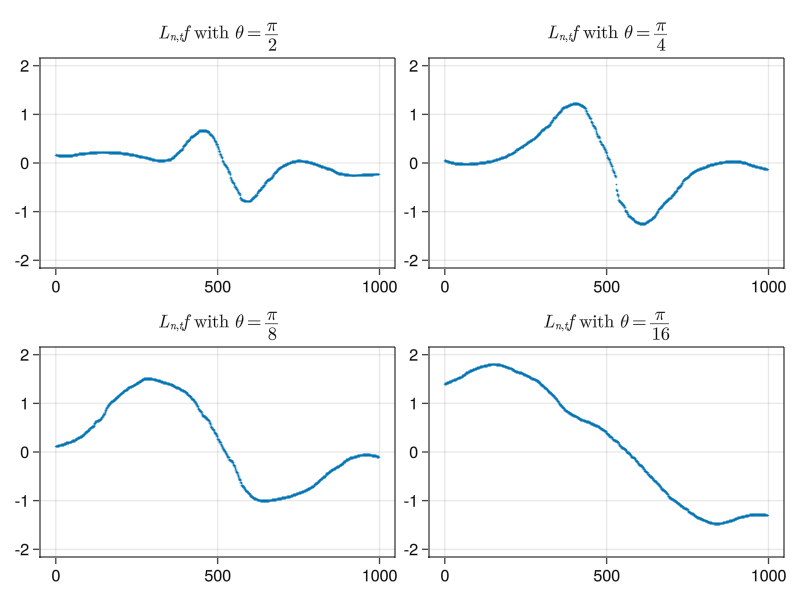}
  \centering
  \caption{$L_{n,t}f$ evaluated on $\Gamma$. Flat manifolds.}
  \label{fig:signal_flat}
\end{figure}

\begin{figure}[ht]
  \centering
  \begin{minipage}{0.45\textwidth}
    \centering
    \includegraphics[width=0.9\textwidth]{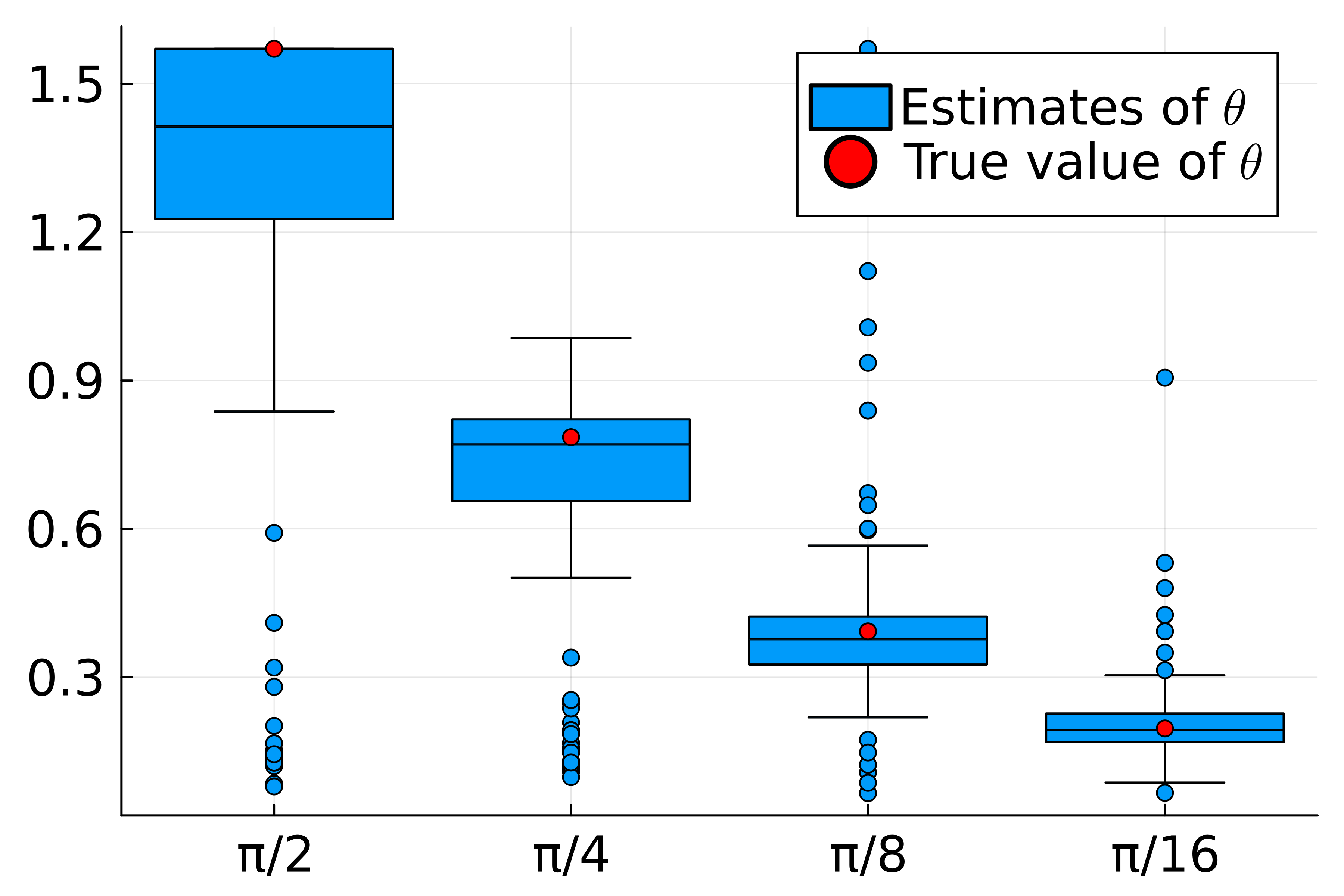}
  \end{minipage}
  \begin{minipage}{0.45\textwidth}
    \centering
    \includegraphics[width=0.9\textwidth]{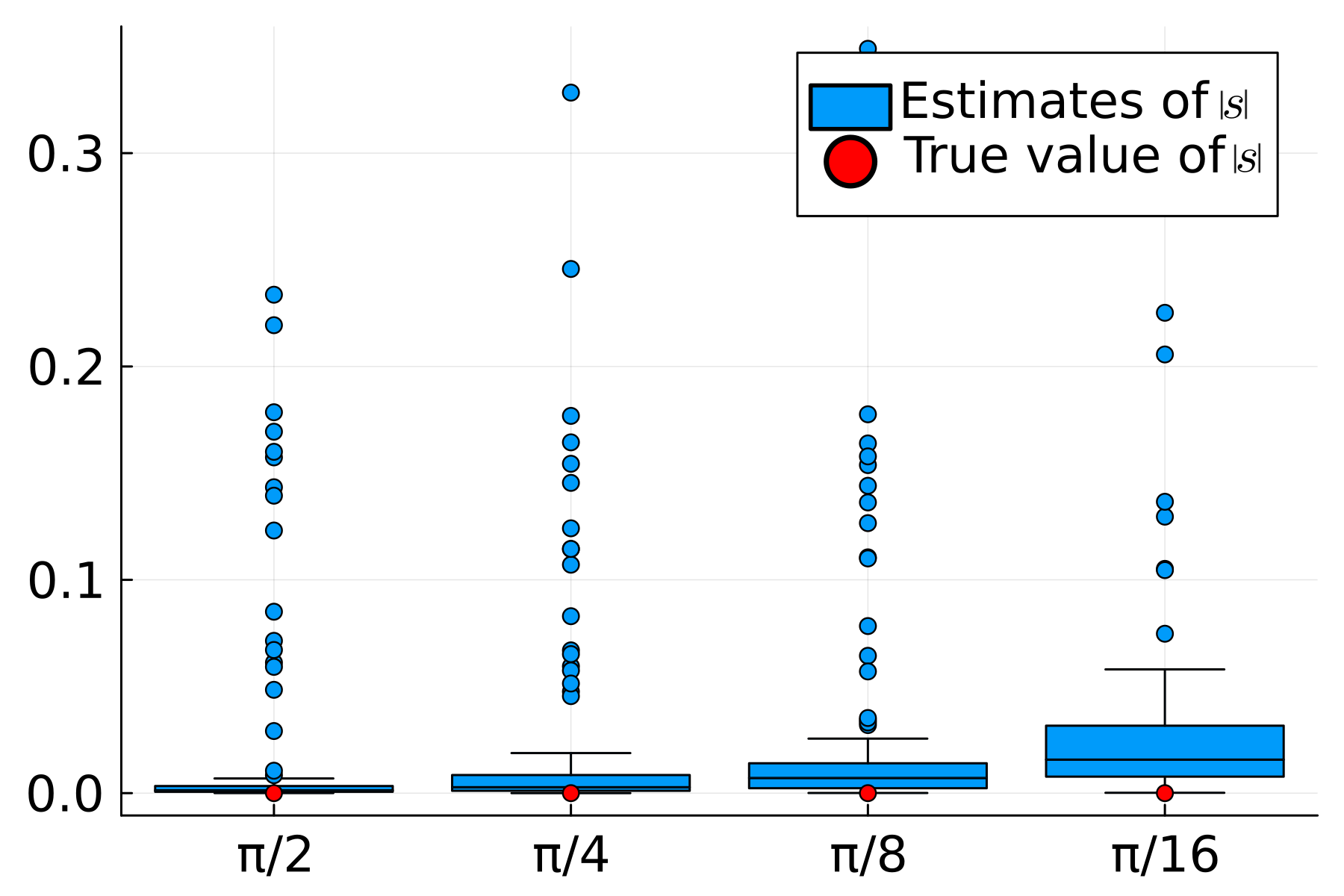}
  \end{minipage}
  \caption{Estimates of $\theta$ and $|s|$ on flat manifolds.}\label{fig:estimates_flat}
\end{figure}

\subsubsection{Results for curved manifolds}
Here we test these methods in the case that $\Omega = \Omega_1 \cup \Omega_2$ is $(L,2R)$-regular, with $L=0.5$ and $R$ having no upper bound. The setup is the same as what we see in \cref{fig:omega_curved}.

If we let $\theta$ denote the angle between the tangent spaces of $\Omega_1$ and $\Omega_2$ at $x_0$, then since there is curvature, we cannot expect $\theta_i = \theta$ for every $i=1,\dots ,n$, or even between any pair of them. However, we can still estimate the location of the intersection as before, and estimating $\theta$ in this way provides some information about the intersection, even if it is not as strong as in the case without curvature. The range of possible values for $\theta$, due to curvature, can be bounded by knowing the bounds of the curvature constant $L$.

In \cref{fig:omega_curved} we see our samples of $\Omega$ and $\Gamma$, in \cref{fig:signal_curved} we see an example of the values we get in $P$. Finally, in \cref{fig:estimates_curved} we see how well this approach works in trying to learn both $\theta$ and $s$.

\begin{figure}[ht]
  \includegraphics[width=1\textwidth]{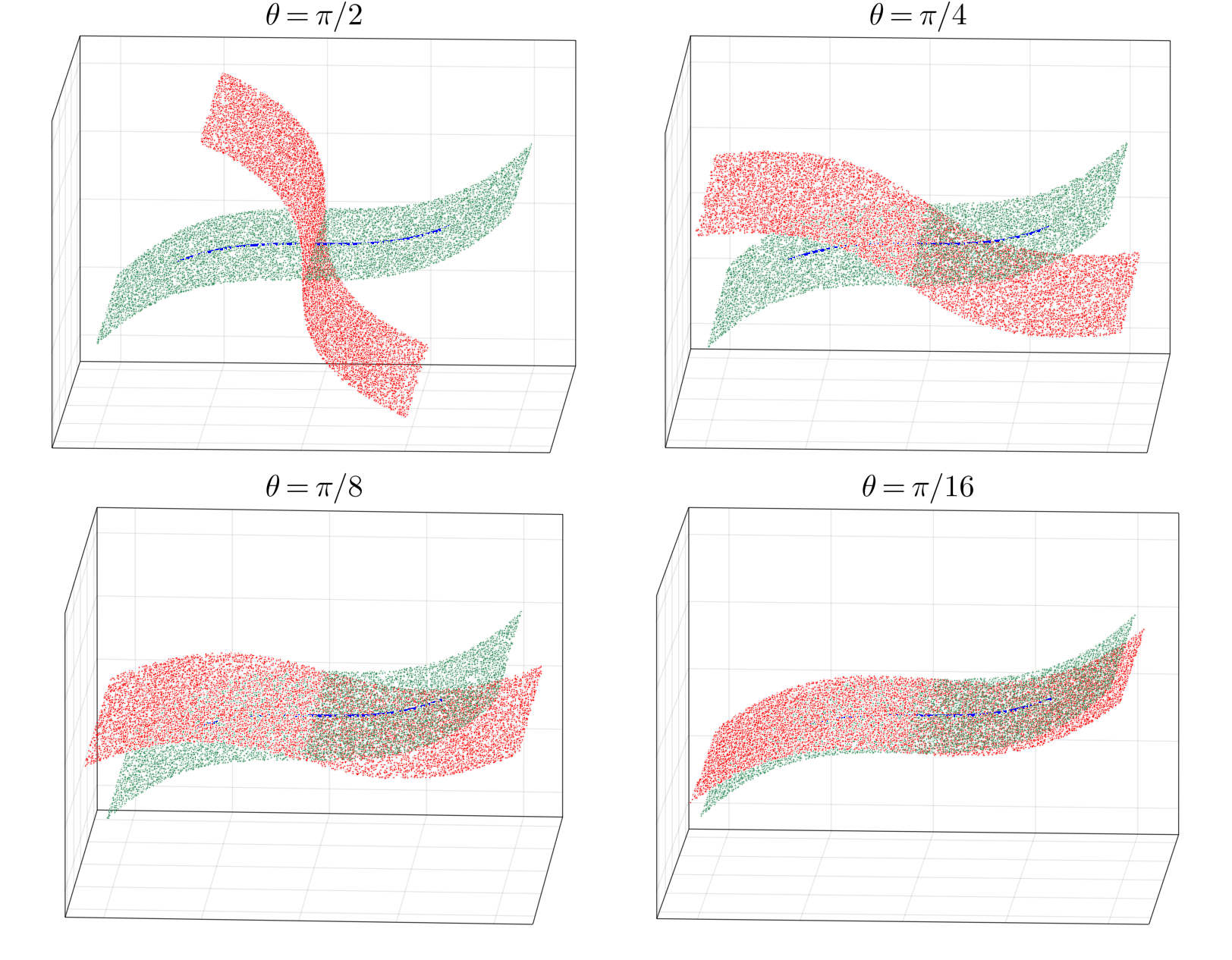}
  \centering
  \caption{Samples of $\Omega = \Omega_1 \cup \Omega_2$ and $\Gamma$ (blue), where $\Omega_1$ (green) and $\Omega_2$ (red) have curvature.}
  \label{fig:omega_curved}
\end{figure}

\begin{figure}[ht]
  \includegraphics[width=0.8\textwidth]{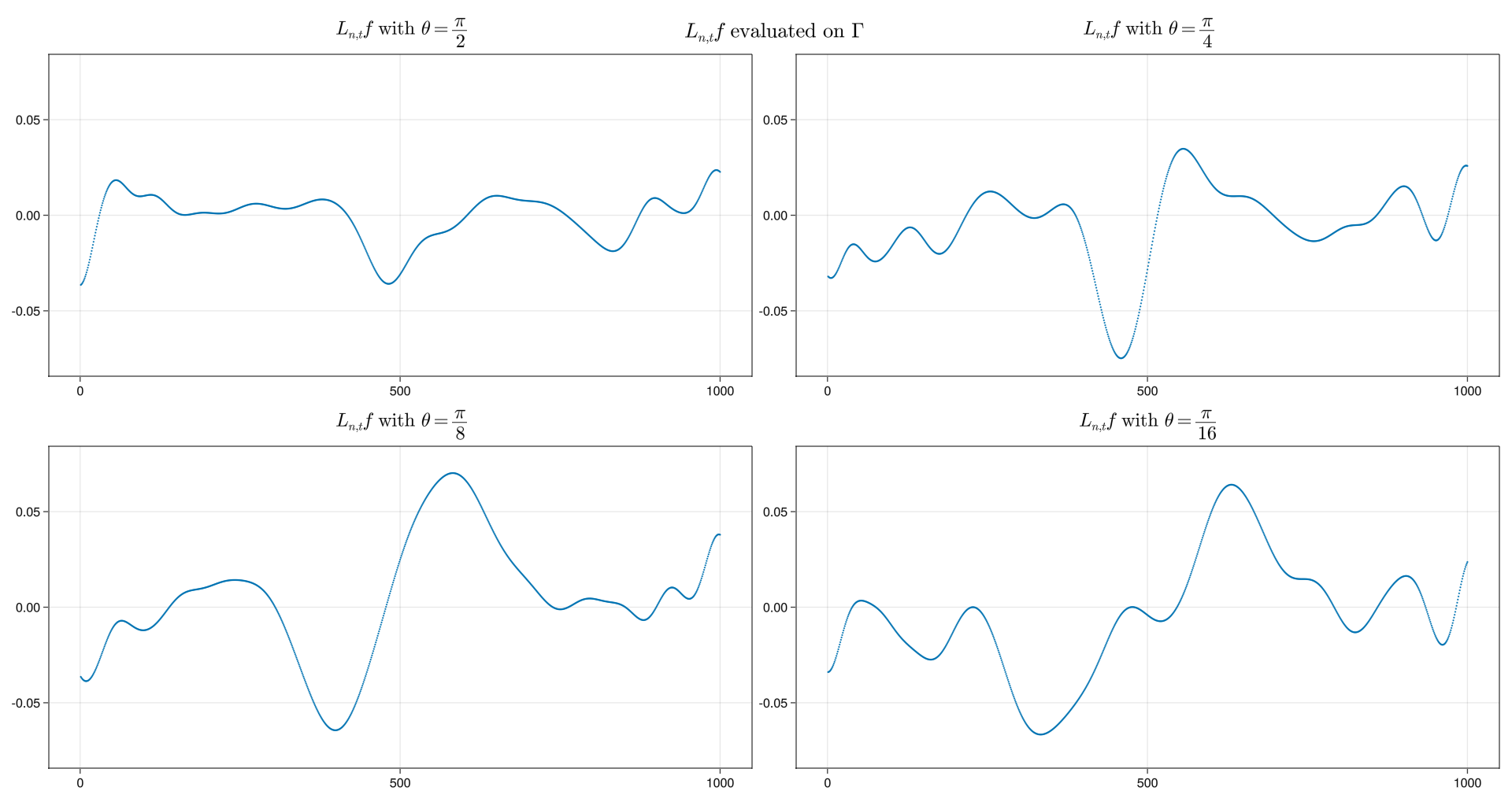}
  \centering
  \caption{$L_{n,t}f$ evaluated on $\Gamma$. Curved manifolds.}
  \label{fig:signal_curved}
\end{figure}

\begin{figure}[ht]
  \centering
  \begin{minipage}{0.45\textwidth}
    \centering
    \includegraphics[width=0.9\textwidth]{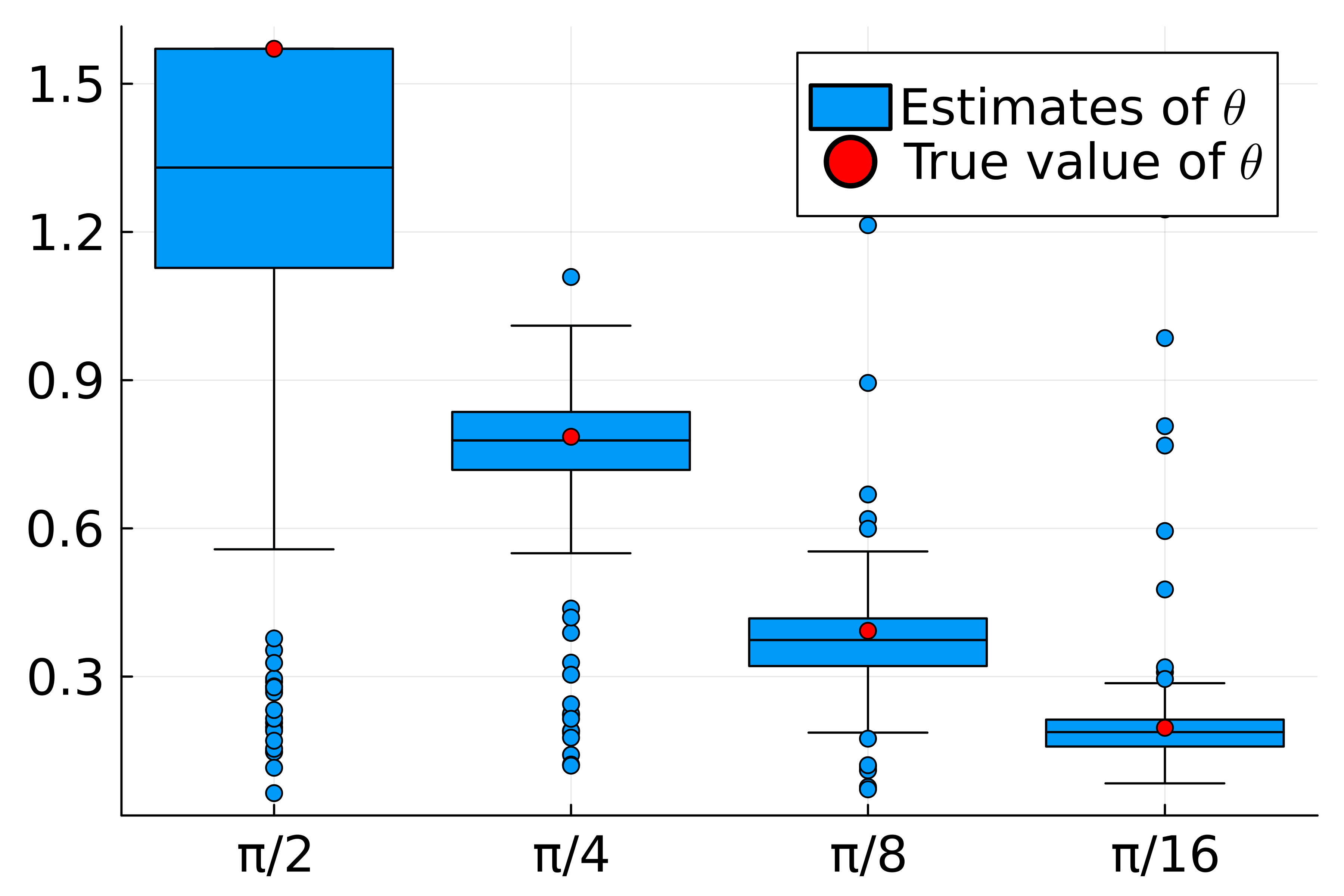}
  \end{minipage}\hfill
  \begin{minipage}{0.45\textwidth}
    \centering
    \includegraphics[width=0.9\textwidth]{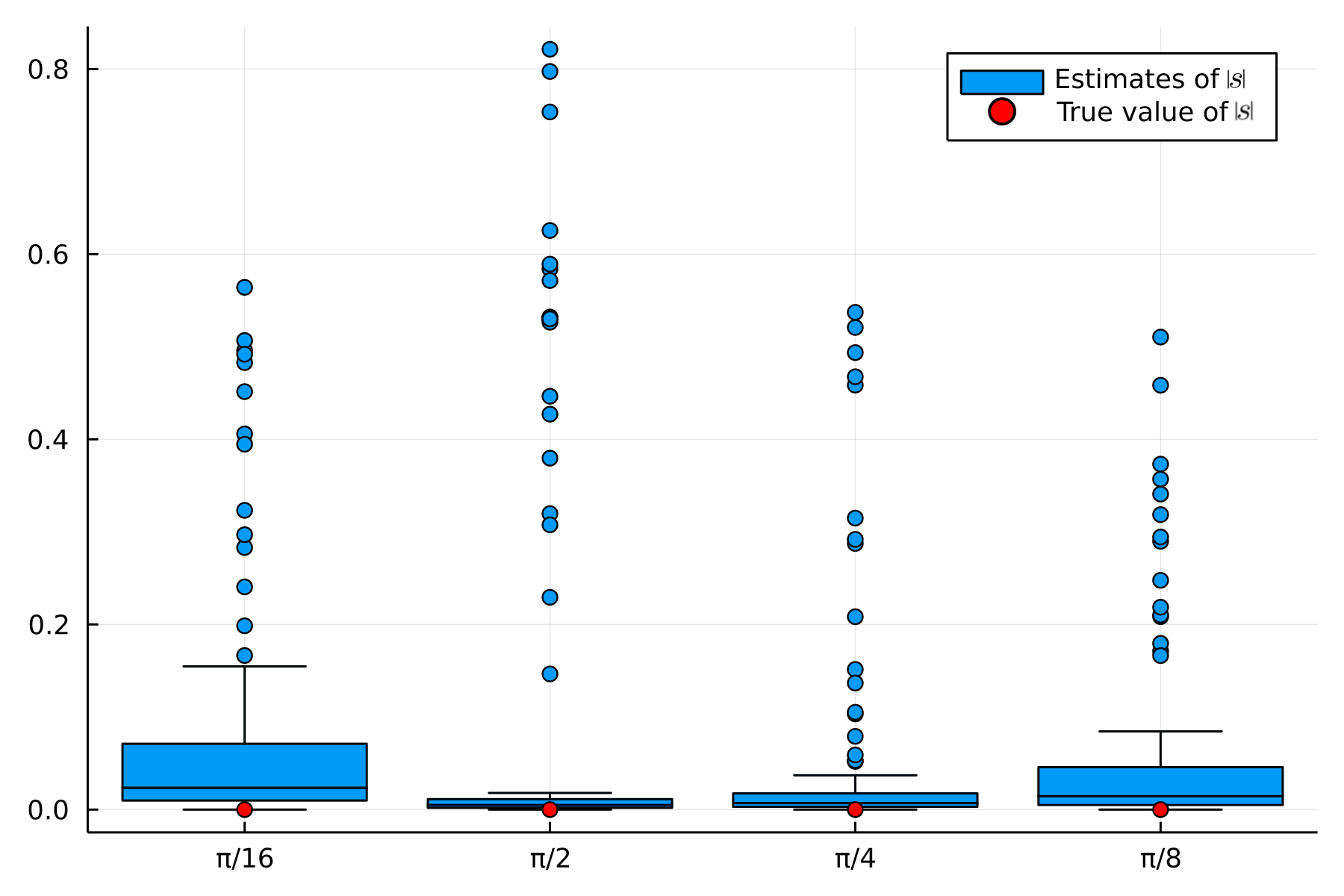}
  \end{minipage}
  \caption{Estimates of $\theta$ and $|s|$ on curved manifolds.}
  \label{fig:estimates_curved}
\end{figure}

\section{Final remarks}
In this paper we built upon the work of~\cite{BQWZ} and developed explicit versions of their asymptotic analysis of $x \to L_tf(x)$. Our results are the strongest and most useful in the case of flat manifolds, and the motivation to focus on this scenario comes partly from neural networks (see \cref{sec:neural}).

We used these explicit bounds to construct hypothesis tests with theoretical guarantees on the level of the test as well as the power under specific conditions. The numerical experiments in \cref{sec:num:hypo} suggest that the power can be close to $1$ with many samples as predicted by \cref{thm:power}. The power of these hypothesis tests can clearly be seen in the example of the zero set of a neural network (see \cref{sec:num:neural}) where we can safely reject the null hypothesis that the zero set is a single flat manifold.

While the bounds in \cref{thm:general} are weaker, our numerical experiments suggest that this approach can be useful for gaining geometric information about the union of more general manifolds $\Omega = \cup_i \Omega_i$. In~\cite{BQWZ}, the authors mainly considered sets $\Omega = \cup_{i=1}^n \Omega_i$ with $n \leq 2$. Our approach of splitting $L_t$ into components $L_t^i$ makes it straight forward to directly apply our theorems for $n \geq 2$, allowing us to consider a wider range of singularities. For example, we can extend the framework to examine points that are both of~\ref{it:type1} and~\ref{it:type2}, or to study intersections of more than two manifolds.

In our numerical experiments in \cref{sec:num:sing}, we assumed that $\Omega = \cup_{i=1}^2 \Omega_i$ and had access to samples of an intersecting curve $\Gamma$, which allowed us to estimate geometric properties near intersections. Future work could involve extending our framework to other types of singularities and developing similar tests and estimators. It would also be interesting to explore methods that do not rely on access to such curves.

Similar theorems can be proven for other kernels besides the Gaussian one, as many ideas used in our proofs are not specific to the Gaussian case, but rather rely mainly on symmetries of $K_t$. Investigating the use of other kernels and comparing their performance in different scenarios is a potential direction for future research.

\section*{Acknowledgments}
The first author was supported by the Wallenberg AI, Autonomous
Systems and Software Program (WASP) funded by the Knut and Alice Wallenberg Foundation.
The second author was supported by the Swedish Research Council grant dnr: 2019--04098.
The authors would like to thank the anonymous reviewers for their valuable comments and suggestions which helped to improve the quality of the paper.

\bibliographystyle{ieeetr}
\bibliography{bibliography}

\appendix

\section{Preliminaries on manifolds and explicit bounds on Gamma functions}\label{sec:preliminaries}

\subsection{Integration on $\Omega$}\label{sec:integration}
We will integrate scalar-valued functions, $f: \Omega \to \R$, over $\Omega$. When formulating integration of scalar-valued functions over submanifolds of $\R^N$, we follow the approach in~\cite{munkres2018analysis}. Because we need some preliminary results concerning integration on $\Omega$, we make some important definitions explicit.

First, let $x_1,\ldots,x_k$ be vectors in $\R^N$ for $k \leq N$. If $I = (i_1, i_2, \dots, i_k)$ is a $k$-tuple of integers such that $i_1 \leq i_2 \leq \cdots \leq i_k$, define $X_I \in \R^{k\times k}$ as the $k\times k$ matrix containing only rows $i_1, \dots,i_k$ of the matrix $X = (x_1,\ldots,x_k)$.
Now we can define the \emph{volume function} $V: \R^{N \times k} \to \R$, by $V(X) = \sqrt{\det^2(X^{t}X)} = \left[ \sum_I \det^2 X_I \right]^{1/2}$, where the $I$'s span over $k$-tuples as above, see~\cite[Theorem 21.4]{munkres2018analysis}.

In general, given a coordinate chart $\alpha: U \to W$, where $U \subset \R^d$, $W \subset \Omega_i \subset \mathbb{R}^N$ are open subsets, and $D\alpha$ is the Jacobian of $\alpha$, we can express integration over $W$ as
\begin{equation*}
  \int_W f \dV = \int_U f\circ \alpha \,  \mathrm{V} (D\alpha).
\end{equation*}

In the coming proofs, when integrating around a point $x\in \interior \Omega_i$, we will change coordinates to the standard basis in $T_{\Omega_i,x} \simeq \R^d$. With this we mean that we can find open sets $W \subset \Omega_i$ around $x$ such that the projection map $\pi : W \to B \subset x + T_{\Omega_i,x}$ is a diffeomorphism, where $x + T_{\Omega_i,x} \coloneqq \Set{x+y}{y \in T_{\Omega_i,x}}$. To integrate over $T_{\Omega_i,x}$ we use the map $\pi^{-1}$ precomposed with an inclusion map.

More specifically and without loss of generality, by translation and an orthonormal coordinate change, we can assume that $T_{\Omega_i,x} =\R^{d}\times \{0\}^{N-d}$.  In this coordinate system we can write
\begin{equation}\label{eq:chart}
  \alpha: U \xrightarrow{i} x + T_{\Omega_i,x} \xrightarrow{\pi^{-1}} W \subset \Omega_i,
\end{equation}
where $i$ is the natural inclusion map and $U$ an open subset in $\R^d$.

\subsection{Important bounds}

The following bounds will be used later in our proofs: First, let $T_{\Omega,x},U,W,\pi$ be as in \cref{sec:integration}. Then for any $y \in W$,
\begin{equation}\label{eq:bound1}
  \norm{y-\pi(y)} \leq O(\norm{x-\pi(y)}^2).
\end{equation}
This follows since $\Omega_i$ is smooth and the tangent space represents a first order approximation of $\Omega_i$ around $x$.

To formulate the second bound, we need the lemma below.
\begin{lemma}
  Let $U,W,x,y,\Omega_i,\pi,i,\alpha$ be as in \cref{sec:integration}. Then the following holds for the volume function $V$:
  \begin{equation*}
    V(D\alpha(y)) = 1 + O(\norm{x-\pi(y)}^2).
  \end{equation*}
\end{lemma}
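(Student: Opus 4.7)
The plan is to work directly in the coordinate system set up in \cref{sec:integration}, where $T_{\Omega_i,x} = \R^d \times \{0\}^{N-d}$, and the chart $\alpha : U \to W$ is given by $\alpha = \pi^{-1} \circ i$ with $i$ the natural inclusion. In these coordinates, after the translation taking $x$ to the origin, the smoothness of $\Omega_i$ together with the fact that it is tangent to $\R^d \times \{0\}^{N-d}$ at $x$ lets us write
\begin{equation*}
    \alpha(u) = \bigl(u,\, \phi(u)\bigr),
\end{equation*}
for some smooth map $\phi : U \to \R^{N-d}$ satisfying $\phi(0) = 0$ and $D\phi(0) = 0$. The vanishing of $D\phi(0)$ is precisely the statement that $\R^d \times \{0\}^{N-d}$ is the tangent space at $x$. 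Note also that for $y \in W$ corresponding to parameter $u$, the projection is $\pi(y) = x + (u,0)$, so $\|x - \pi(y)\| = \|u\|$.

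Next I would compute the Jacobian: in block form,
\begin{equation*}
    D\alpha(u) = \begin{pmatrix} I_d \\ D\phi(u) \end{pmatrix},
    \qquad
    D\alpha(u)^{\!t}\, D\alpha(u) = I_d + D\phi(u)^{\!t} D\phi(u).
\end{equation*}
Since $\phi$ is smooth on the relatively compact set in question and $D\phi(0) = 0$, Taylor's theorem gives $\|D\phi(u)\| \leq C\|u\|$, hence $D\phi(u)^{\!t}D\phi(u)$ is a positive semidefinite matrix whose entries are $O(\|u\|^2)$.

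From here the conclusion follows from standard matrix-analytic expansions. For a symmetric matrix $M$ with $\|M\|$ small, $\det(I_d + M) = 1 + \tr M + O(\|M\|^2) = 1 + O(\|M\|)$, and since in our case $\|M\| = O(\|u\|^2)$ we obtain
\begin{equation*}
    \det\!\bigl(D\alpha(u)^{\!t} D\alpha(u)\bigr) = 1 + O(\|u\|^2).
\end{equation*}
Taking the square root (the expansion $\sqrt{1+s} = 1 + O(s)$ for small $s$) yields
\begin{equation*}
    V(D\alpha(u)) = \sqrt{\det\!\bigl(D\alpha(u)^{\!t}D\alpha(u)\bigr)} = 1 + O(\|u\|^2) = 1 + O(\|x - \pi(y)\|^2),
\end{equation*}
which is the desired bound.

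No step is particularly difficult; the only thing to be careful about is the bookkeeping of the identifications made in \cref{sec:integration}, in particular that $\pi(y)$ in the statement of the lemma corresponds to the base-point parameter $u$ in the chart, so that the remainder really scales as $\|x - \pi(y)\|^2$ rather than, say, $\|x - y\|^2$. The implicit constants in $O(\cdot)$ depend on the $C^2$-norm of $\phi$ on the relevant domain, which is uniformly controlled by the $(L,r)$-regularity of $\Omega$ from \cref{def:Lr-regular}; making this dependence quantitative (should one wish to do so later) is the only place where some care is needed.
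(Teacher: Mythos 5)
Your proof is correct and follows essentially the same route as the paper: both parametrize $\Omega_i$ as a graph $u \mapsto (u, \phi(u))$ over the tangent space with $D\phi(0)=0$, so the correction to the volume element is quadratic in $\|u\| = \|x-\pi(y)\|$. The only cosmetic difference is that you expand $\det(D\alpha^{t}D\alpha)$ directly via the Gram matrix, while the paper expands the equivalent sum of squared $d\times d$ minors; by Cauchy--Binet these are the same computation.
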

\begin{proof}
  Since $\alpha = \pi^{-1} \circ i$, and the tangent space is a first order approximation of $\Omega$, we can parametrize the $W$ by $\alpha(u) = (u,g(u))$.
  It is then easy to see that for $i = 1,\ldots,d$ we have
  \begin{equation*}
    \partial_i \alpha(y) = (e_i,\partial_i g(u)),
  \end{equation*}
  where $\partial_i g(0) = 0$ and $\norm{\partial_i g(u)} = O(\norm{u})$.
  Now
  \begin{equation*}
    \det D\alpha_I =
    \begin{cases}
      1           & \text{if $I=(1,2\dots,d)$} \\
      O(\norm{u}) & \text{otherwise.}
    \end{cases}
    .
  \end{equation*}
  If we Taylor expand $x\to \sqrt{x}$, we get
  \begin{equation*}
    V(D\alpha) = \left(\sum_I (\det D\alpha_I)^2  \right)^{1/2} = 1 + O(\norm{u}^2),
  \end{equation*}
  and by applying the above on $(u,0) = x-\pi(y)$ we are finished.
\end{proof}
Further, since we have a finite union $\Omega = \cup_i \Omega_i$, $\Omega_i$ is compact, \cref{eq:bound1}, and the previous lemma implies that we can find a uniform bound $L$ such that for all tuples $(U,W,x,y,\pi, \Omega_i)$
\begin{equation*}
  \norm{y-\pi(y)} \leq L \norm{x -\pi(y)}^2
\end{equation*}
and
\begin{equation*}
  |V(D\alpha)-1| \leq L\norm{x-\pi(y)}^2
\end{equation*}
holds.

\subsection{Gamma functions} \label{A:gamma}
In the proofs of several of our results we will need to handle the \emph{Gamma function} $\Gamma(\cdot)$, and both the \emph{lower} and  \emph{upper incomplete gamma functions}, $\gamma(\cdot,\cdot)$ and $\Gamma(\cdot,\cdot)$ respectively. These are well-known and  are defined by the equations
\begin{equation*}
  \Gamma(a) = \int_0^\infty t^{a-1}{\e}^{-t}\dt,
  \quad
  \gamma(a,x) = \int_0^x t^{a-1}{\e}^{-t}\dt,
  \quad
  \Gamma(a,x) =  \int_x^\infty t^{a-1}{\e}^{-t}\dt.
\end{equation*}
In this paper both $a$ and $x$ are non-negative real numbers.

We will need the following bounds:
First, if $a\geq 1$, then $t^{a-1} \geq x^{a-1}$ and
\begin{equation}\label{eq:upper_gamma_bound2}
  \Gamma(a,x) \geq x^{a-1}\int_x^\infty {\e}^{-t} \dt = x^{a-1}{\e}^{-x}.
\end{equation}
Secondly, if ${\e}^x > 2^a$, then by~\cite[Theorem 4.4.3]{gabcke1979neue},
\begin{equation}\label{eq:upper_gamma_bound}
  \Gamma(a,x) \leq a x^{a-1}{\e}^{-x}.
\end{equation}
Finally, we need the lower bound
\begin{equation}\label{eq:lower_gamma_bound}
  \gamma(a,a) \geq \frac{1}{2}\Gamma(a).
\end{equation}
That this holds can be seen by viewing $\gamma(a,x)$ as an unnormalized version of the cumulative distribution function of the Gamma distribution, for which it is well-known that the median $\nu$ is less than $a$,~\cite{chen1986bounds}.

\section{Proof of \cref{thm:explicit:intersection,thm:explicit:boundary}}\label{sec:proof:explicit}

\subsection{Proof of \cref{thm:explicit:intersection}}

Since $x\to L_t^if(x)$ is translation and rotation invariant, we can without loss of generality assume that $\Omega_i$ oriented in $\R^N$ in such a way which makes it a subset of $\R^d\times \{0\}^{N-d}$.

We want to evaluate
\begin{equation*}
  L^i_t f(x) = \int_{\Omega_i} K_t(x,y) (f( x) - f(y))p \dy.
\end{equation*}
We begin by splitting the integral above into
\begin{align}
  \notag \int_{\Omega_i} K_t(x,y) (f( x) - f(y))p \dy & = \int_{B_{R}( x) \cap \Omega_i} K_t(x,y) (f( x) - f(y))p \dy            \\
  \notag                                              & \quad + \int_{\Omega_i \setminus B_{R}( x)} K_t(x,y) (f( x) - f(y))p \dy \\
  \label{eq:thm1_I1I2}                                & = I_1 + I_2.
\end{align}
For estimating $I_2$, by translation invariance we can without loss of generality assume that $x=0$. Now we make a change of variables and rescale $y$, which allows us to say that
\begin{equation*}
  \begin{split}
    |I_2| & = \left |\int_{\Omega_i \setminus B_{R}(0)} K_t(0,y) (f(0) - f(y)) p\dy \right |
    \\
          & = \left |\int_{\Omega_i \setminus B_{r_0 \sqrt{t}}(0)} {\e}^{-\|y\|^2/t} v \cdot (-y)p\dy \right |                                       \\
          & = \left |\int_{\left( \frac{1}{\sqrt{t}}\Omega_i \right) \setminus B_{r_0}(0)} {\e}^{-\|y\|^2} v \cdot (-y\sqrt{t}) t^{d/2}p\dy \right | \\
          & \leq t^{\frac{d+1}{2}}\int_{\R^d \setminus B_{r_0}} {\e}^{-\|y\|^2}  \|y\|p \dy.
  \end{split}
\end{equation*}
Now, by first changing to spherical coordinates and integrating out the angular parts, we deduce that
\begin{equation} \label{eq:thm1_I_2}
  \left| I_2 \right| \leq  t^{\frac{d+1}{2}}\left|\mathbb{S}^{d-1}\right| p\int_{r_0}^\infty {\e}^{-s^2}s^{d}ds = \frac{1}{2} pt^{\frac{d+1}{2}}\left|\mathbb{S}^{d-1}\right|\Gamma\left(\frac{d+1}{2},r_0^2\right).
\end{equation}
To finalize the bound of $I_2$, we note that by assumption $r_0 > \max\{2, \sqrt{2d/3}\}$, i.e., $r_0^2 > \max\{4, 2d/3\}$, which implies ${\e}^{r_0^2} > 2^{(d+1)/2}$ for $d \geq 1$, and we can use \cref{eq:upper_gamma_bound,eq:thm1_I_2} to conclude
\begin{equation} \label{eq:thm1_I_2_final}
  |I_2| \leq B(x)t^{\frac{d+1}{2}}{\e}^{-r_0^2},
\end{equation}
where $B(x)$ is some function such that
\begin{equation*}
  B(x) \leq \frac{d+1}{4}
  r_0^{d}p|\mathbb{S}^{d-1}|.
\end{equation*}

To bound $I_1$, we use the following simple geometric fact:
\begin{equation*}
  \|x - y\|^2 = \|\hat x - y\|^2 + \|\hat x-x\|^2 = \|\hat x - y\|^2 + \sin^2\theta r^2 t,
\end{equation*}
which implies that
\begin{equation*}
  {\e}^{-\| x - y\|^2/t} = {\e}^{-\sin^2\theta  r^2} {\e}^{-\| \hat x - y\|^2/t}.
\end{equation*}
From the above we can conclude
\begin{align}
  \notag I_1             & = {\e}^{-\sin^2\theta  r^2} \int_{B_{R}( x) \cap \Omega_i} {\e}^{-\|\hat x - y\|^2/t} v \cdot ( x-y) p\dy \\
  \notag                 &
  \begin{multlined}
    ={\e}^{-\sin^2\theta  r^2} \bigg( \int_{B_{R}( x) \cap \Omega_i} {\e}^{-\|\hat x - y\|^2/t} v \cdot (x- \hat x) p\dy \\
    + \int_{B_{R}( x) \cap \Omega_i} {\e}^{-\|\hat x - y\|^2/t} v \cdot (\hat x-y) p\dy \bigg)
  \end{multlined}
  \\
  \label{eq:thm1_II_III} & ={\e}^{-r^2\sin^2\theta } (II + III).
\end{align}

It is easier to integrate over a ball centered around $\hat x$, and to this end we define $\delta \geq 0$ by
\begin{equation}\label{eq:delta}
  \delta = \sqrt{R^2 - tr^2\sin^2\theta  }.
\end{equation}
Then since $\hat x$ is the orthogonal projection of $x$, we have that $B_R( x) \cap \Omega_i = B_\delta(\hat x) \cap \Omega_i$.

Let us focus on $II$: We use the \cref{eq:delta} and change to spherical coordinates, which yields
\begin{align}
  \notag II     & = v\cdot(x-\hat{x})t^{d/2}\int_{B_{\delta/\sqrt{t}}(\hat x)\cap \Omega_i}{\e}^{-\|\hat x-y\|^2}p\dy.                                                            \\
  \notag        & = v\cdot(x-\hat{x})t^{d/2}|\mathbb{S}^{d-1}| p\int_0^{\delta/\sqrt{t}}{\e}^{-s^2}s^{d-1}ds  = \frac{1}{2}v\cdot(x-\hat{x})t^{d/2}|\mathbb{S}^{d-1}|p\gamma(d/2,\delta^2/t) \\
  \label{eq:II} & = \frac{1}{2} v\cdot\frac{x-\hat{x}}{\|x-\hat{x}\|}t^{\frac{d+1}{2}}r\sin\theta|\mathbb{S}^{d-1}|p\gamma(d/2,\delta^2/t).
\end{align}
To estimate the RHS of \cref{eq:II} we will bound the $\gamma$-function from above and below:
Using $r_0^2 > \frac{2d}{3}$, $r<r_0/2$ and the definition of $\delta$, we get
\begin{equation*}
  \frac{d}{2}< \frac{3r_0^2}{4} < r_0^2-\sin^2 \theta r^2=\frac{\delta^2}{t}.
\end{equation*}
By \cref{eq:lower_gamma_bound} we now see that
\begin{equation}\label{eq:thm1_gamma_lower}
  \frac{1}{2}\Gamma(d/2) \leq\gamma(d/2,d/2)\leq\gamma(d/2,\delta^2/t).
\end{equation}
For the upper bound, we simply use
\begin{equation}\label{eq:thm1_gamma_upper}
  \gamma(d/2,\delta^2/t) \leq \Gamma(d/2).
\end{equation}
Now \cref{eq:II,eq:thm1_gamma_lower,eq:thm1_gamma_upper} together with $|\mathbb{S}^{d-1}| = \frac{2\pi^{d/2}}{\Gamma(d/2)}$ finally gives
\begin{equation*}
  II = A(d,r_0,\theta)v_{n,\Omega_i}t^{\frac{d+1}{2}}r\sin \theta,
\end{equation*}
where
\begin{equation}\label{eq:A}
  \frac{1}{2}p\pi^{d/2} \leq A(d,r_0,\theta) \leq p\pi^{d/2}.
\end{equation}

Finally, $III = 0$. This follows from that $B_R( x) \cap \partial \Omega_i = \emptyset$, the rotational symmetry of $K$, and the fact that the linear function is odd.
Collecting \cref{eq:thm1_I1I2,eq:thm1_I_2_final,eq:thm1_II_III,eq:II} we get
\begin{equation*}
  L_t^i f(x) = t^{\frac{d+1}{2}}\left(A(d,r_0,\theta) v_{n,\Omega_i}\sin\theta r{\e}^{-\sin^2\theta r^2} + B(x){\e}^{-r_0^2}\right).
\end{equation*}
{\flushright \qed}

\subsection{Proof of \cref{thm:explicit:boundary}}\label{sec:proof:boundary}
We will follow the proof of \cref{thm:explicit:intersection} and modify where needed.
Let $I_2,II$ and $III$ be defined as in \cref{eq:thm1_I1I2,eq:thm1_II_III}. Then, since $I_2$ is bounded like in \cref{eq:thm1_I_2_final}, we only need to find bounds for $II$ and $III$.

Let $\delta$ be defined as in \cref{eq:delta} and define $\delta_0 = \delta/\sqrt{t}$.
Recall also the fact that $B_R(x)\cap \Omega_i = B_\delta(\hat x) \cap \Omega_i$. Now  the difference in bounding $II$ and $III$ to the proof of \cref{thm:explicit:intersection} is that $B_\delta(\hat x) \cap \partial \Omega_i$ is nonempty.
Since, by assumption, $\partial \Omega_i$ is part of a $d-1$-dimensional flat space, $B_\delta(\hat x) \cap \Omega_i$ is a $d$-dimensional ball, but missing a spherical cap.

We now use cylindrical coordinates $(h,\varrho,\varphi)$ to describe the domain \newline $B_{\delta/\sqrt{t}}(\hat x)\cap \Omega_i$. In these new coordinates we are centered around $\hat x$, and $(\varrho,\varphi)$ are coordinates for a $d-1$-dimensional ball tangential to $\partial \Omega$, while the perpendicular coordinate $h$ is oriented along the outwards normal of $\partial \Omega_i$.
Let us denote this unit normal by $n_{\partial \Omega}$, and the projection of $\hat x$ to $\partial \Omega$ by $\hat x_{\partial \Omega}$. We now set $K = (\hat x - \hat x_{\partial \Omega}) \cdot n_{\partial \Omega} = \sqrt{t}k_0$, where $-\delta_0 \leq k_0 \leq \delta_0$.

Then, with $III$ defined in \cref{eq:thm1_II_III} we get
\begin{equation*}
  III=\int_{-\delta}^{K}\int_0^{\sqrt{\delta^2 - h^2}}\int_{\mathbb{S}^{d-2}}K_t(\hat x,y)v \cdot (\hat x -y) \varrho^{d-2}\dvar{\varphi}\dvar{\varrho}\dvar{h}.
\end{equation*}
We split $v$ into a normal component $v_n = (v\cdot n_{\partial \Omega}) n_{\partial \Omega}$ and a component $v_T = v-v_n$ which is tangential to the boundary $\partial \Omega$.
Then, since the function $y \to v_T \cdot (\hat x -y)$ is odd as a function centered around $\hat x$, and the domain of integration is symmetric around $\hat x$, we know that the tangential component of $III$ satisfies
\begin{equation*}
  III_T:=\int_{-\delta}^{K}\int_0^{\sqrt{\delta^2 - h^2}}\int_{\mathbb{S}^{d-2}}K_t(\hat x,y)v_T \cdot (\hat x -y) \varrho^{d-2}\dvar{\varphi}\dvar{\varrho}\dvar{h} = 0.
\end{equation*}
By definition of  $v_{n,\partial \Omega}$, we have that $v_n \cdot (\hat x - y) = v_{n,\partial \Omega}(n_{\partial \Omega} \cdot (\hat x -y)) = v_{n,\partial \Omega}h$, which implies that
\begin{align*}
  III
   & =
  v_{n,\partial \Omega}\int_{-\delta}^{K}\int_0^{\sqrt{\delta^2 - h^2}}\int_{\mathbb{S}^{d-2}}K_t(\hat x,y)h \varrho^{d-2}\dvar{\varphi}\dvar{\varrho}\dvar{h}
  \\
   & = v_{n,\partial \Omega}\int_{-\delta}^{K}\int_0^{\sqrt{\delta^2 - h^2}}\int_{\mathbb{S}^{d-2}}{\e}^{-h^2/t - \varrho^2/t}h \varrho^{d-2}\dvar{\varphi}\dvar{\varrho}\dvar{h}
  \\
   & =t^{\frac{d+1}{2}}v_{n,\partial \Omega}\int_{-\delta_0}^{k_0} h {\e}^{-h^2}\int_0^{\sqrt{\delta_0^2 - h^2}}\int_{\mathbb{S}^{d-2}}{\e}^{- \varrho^2} \varrho^{d-2}\dvar{\varphi}\dvar{\varrho}\dvar{h}.
\end{align*}
Continuing with the two inner integrals,
\begin{align*}
  \int_0^{\sqrt{\delta_0^2 - h^2}}\int_{\mathbb{S}^{d-2}}{\e}^{-\varrho^2}\varrho^{d-2}\dvar{\varphi}\dvar{\varrho} & = \frac{|\mathbb{S}^{d-2}|}{2}\int_0^{\delta_0^2 - h^2}{\e}^{-s}s^{d/2-3/2}\dvar{s} \\
                                                                                                                 & = \frac{|\mathbb{S}^{d-2}|}{2}\gamma\left(\frac{d-1}{2},\delta_0^2-h^2\right).
\end{align*}
Using this expression in the full integral and applying partial integration in the second equality below yields
\begin{align*}
  III & =t^{\frac{d+1}{2}}v_{n,\partial \Omega}\frac{|\mathbb{S}^{d-2}|}{2}\int_{-\delta_0}^{k_0}{\e}^{-h^2}h \gamma\left(\frac{d-1}{2},\delta_0^2-h^2\right)\dvar{h}                                                            \\
      & = t^{\frac{d+1}{2}}v_{n,\partial \Omega}\frac{|\mathbb{S}^{d-2}|}{2}\bigg(\frac{1}{2}\left[ -{\e}^{-h^2}\gamma\left(\frac{d-1}{2},\delta_0^2-h^2\right) \right]^{k_0}_{-\delta_0}                                        \\
      & \qquad - \frac{1}{2}{\e}^{-\delta_0^2}\int_{-\delta_0}^{k_0}(\delta^2-h^2)^{(d-3)/2}h\dvar{h}\bigg)                                                                                                            \\
      & = t^{\frac{d+1}{2}}v_{n,\partial \Omega}\frac{|\mathbb{S}^{d-2}|}{2}\left(\frac{1}{2}{\e}^{-k_0^2}\gamma\left(\frac{d-1}{2},\delta_0^2-k_0^2\right)  + {\e}^{-\delta_0^2}\frac{(\delta_0^2 - k_0^2)^{(d-1)/2}}{d-1}\right).
\end{align*}
Thus, we know that
\begin{equation}\label{eq:III_bound}
  III =t^{\frac{d+1}{2}}v_{n,\partial \Omega} \frac{p|\mathbb{S}^{d-2}|}{2}\left({\e}^{-\delta_0^2}\frac{(\delta_0^2 - k_0^2)^{(d-1)/2}}{d-1} +\frac{1}{2}{\e}^{-k_0^2}\gamma\left(\frac{d-1}{2},\delta_0^2-k_0^2\right)\right).
\end{equation}

We now address the integral $II$ defined in \cref{eq:thm1_II_III}, which means we need to calculate
\begin{equation*}
  J \coloneqq \int_{B_{R}( x) \cap \Omega_i} {\e}^{-\|\hat x - y\|^2/t} p\dy.
\end{equation*}
After a change cylindrical coordinates as for $III$, we rewrite this integral as
\begin{equation*}
  J = p\int_{-\delta_0}^{k_0} {\e}^{-h^2}\logamma{\frac{d-1}{2}}{\delta_0^2 - h^2}\dvar{h}.
\end{equation*}
We can immediately bound $J$ from above by
\begin{equation}\label{eq:J_upperbound}
  p\Gamma\left(\frac{d-1}{2}\right)\int_{-\delta_0}^{k_0} {\e}^{-h^2}\dvar{h} \leq  p\Gamma\left(\frac{d-1}{2}\right)\int_{-\infty}^{\infty}{\e}^{-h^2}\dvar{h} = p\Gamma\left(\frac{d-1}{2}\right)\sqrt{\pi}.
\end{equation}
Now we bound $J$ from below: Since the integrand is positive, we can without loss of generality assume that $k_0 < 0$. Then a change of variables $h = - \sqrt{\delta_0^2 - y}$ yields that
\begin{align*}
  J & \geq p{\e}^{-\delta_0^2}\int_{0}^{\delta_0^2 - k_0^2} {\e}^{y} \logamma{\frac{d-1}{2}}{y}\frac{1}{2\sqrt{\delta_0^2-y}}\dy \\
    & \geq p{\e}^{-\delta_0^2} \frac{1}{2\delta_0} \int_0^{\delta_0^2-k_0^2}{\e}^y \logamma{\frac{d-1}{2}}{y}\dy.
\end{align*}
Using partial integration above we then get
\begin{align*}
  J & \geq \frac{p{\e}^{-\delta_0^2}}{2\delta}\left[{\e}^y\logamma{\frac{d-1}{2}}{y} - \frac{y^{\frac{d-1}{2}}}{\frac{d-1}{2}} \right]_0^{\delta_0^2 - k_0^2}                       \\
    & =\frac{p{\e}^{-\delta_0^2}}{2\delta_0}\left({\e}^{\delta_0^2 - k_0^2} \logamma{\frac{d-1}{2}}{\delta_0^2 - k_0^2} - \frac{2(\delta_0^2 - k_0^2)^{\frac{d-1}{2}}}{d-1}\right).
\end{align*}
Simplifying further gives us
\begin{equation}\label{eq:J_lowerbound}
  J \geq \frac{p}{2\delta_0}\left({\e}^{-k_0^2}\logamma{\frac{d-1}{2}}{\delta_0^2-k_0^2}-{\e}^{-\delta_0^2}\frac{2(\delta_0^2-k_0^2)^{\frac{d-1}{2}}}{d-1}\right).
\end{equation}
Thus, equation \cref{eq:III_bound} and the bounds in \cref{eq:J_lowerbound} and \cref{eq:J_upperbound} proves the theorem.
  {\flushright \qed}

\section{Proofs of \cref{sec:hypothesis}}\label{sec:proof:hypothesis}
\subsection{Proof of \cref{thm:hypothesis}}
Set $R = 1$ and $r_0 = 1/\sqrt{t}$. By the assumption $t < \min\{1/4, 3/(2d)\}$, we have $r_0 > \max\{2, \sqrt{2d/3}\}$, so the hypotheses of \cref{thm:explicit:intersection} are satisfied. Under $H_0$, we have $B_2(x_0) \cap \Omega = B_2(x_0) \cap \Omega_i$ and $\partial \Omega_i \cap B_{2}(x_0) = \emptyset$.

For any $x \in B_1(x_0) \cap \Omega_i$, since $x \in \Omega_i$ we have $\hat x = x$ and thus $\theta = 0$. Therefore $\sin \theta = 0$, and \cref{thm:explicit:intersection} gives
\begin{equation*}
  L_t^i f(x) = t^{\frac{d+1}{2}} B(x) {\e}^{-r_0^2} = t^{\frac{d+1}{2}} B(x) {\e}^{-1/t}.
\end{equation*}
By the bound on $B$ from \cref{thm:explicit:intersection}, we have $|B(x)| \leq \frac{d+1}{4} r_0^{d} p|\mathbb{S}^{d-1}|$.

Furthermore, under $H_0$, if $j \neq i$ then $\Omega_j \cap B_2(x_0) = \emptyset$. Thus for $x \in B_1(x_0)$ and $y \in \bigcup_{j \neq i} \Omega_j$, we have $\|x-y\| \geq \|y-x_0\| - \|x-x_0\| \geq 2-1=1$. Since $\int_{\bigcup_{j \neq i} \Omega_j} p(y) \dvar{y} \leq 1$, this implies
\begin{equation*}
  \left|\sum_{j \neq i} L_t^j f(x)\right| \leq \left( \max_{\rho \geq 1} {\e}^{-\rho^2/t}\rho \right) \int_{\bigcup_{j \neq i} \Omega_j} p(y) \dvar{y}
  \leq {\e}^{-1/t}.
\end{equation*}

Combining the above bounds, for $x \in B_1(x_0)$ we have
\begin{equation*}
  |L_t f(x)| \leq t^{\frac{d+1}{2}} |B(x)| {\e}^{-1/t} + {\e}^{-1/t} =: M(t).
\end{equation*}
Using the bound on $|B(x)|$ and $r_0 = 1/\sqrt{t}$, we obtain $M(t) \leq \frac{p(d+1)|\mathbb{S}^{d-1}|}{4} \sqrt{t}\, {\e}^{-1/t} + {\e}^{-1/t}$. For $t \leq 1$, we have $\sqrt{t} \leq 1$, so
\begin{equation*}
  M(t) \leq C_d \, {\e}^{-1/t},
\end{equation*}
where $C_d = \frac{p(d+1)|\mathbb{S}^{d-1}|}{4} + 1$. Requiring $M(t) \leq \delta/2$ and squaring gives $C_d^2 {\e}^{-2/t} \leq t\log(2n/\alpha)/(4\e(n-1))$. Since $\log t \leq 0$ for $t \leq 1$, taking logarithms yields the sufficient condition $2/t \geq \log(4C_d^2 \e(n-1)/\log(2n/\alpha))$. Thus, the bound on $t$ ensures $M(t) \leq \delta/2$.

Setting $\epsilon = \delta/2$ and applying \cref{thm:finite:sample}, we obtain
\begin{equation*}
  \P\left(\max_m \left|L_{n,t} f(X_m) - \frac{n-1}{n}L_t f(X_m)\right| > \frac{\delta}{2}\right) \leq 2n \exp \left ( -\frac{\e (n-1) \delta^2}{t} \right ).
\end{equation*}
By the definition $\delta = \sqrt{\frac{t}{\e(n-1)}\log \frac{2n}{\alpha}}$, the right-hand side equals $\alpha$.

On the complement of this event, for any $m$ with $X_m \in B_1(x_0)$:
\begin{equation*}
  |L_{n,t} f(X_m)| \leq \left|L_{n,t} f(X_m) - \frac{n-1}{n}L_t f(X_m)\right| + \frac{n-1}{n}|L_t f(X_m)| < \frac{\delta}{2} + M(t) \leq \delta.
\end{equation*}
Thus $\P(T > \delta) \leq \alpha$, completing the proof.
{\flushright \qed}

\subsection{Proof of \cref{thm:power}}
Under hypothesis $H_1'$ we have from \cref{thm:explicit:intersection} that for $r_0 = 1/\sqrt{t}$ that
\begin{equation*}
  L_t^i f(x) = t^{\frac{d+1}{2}} \left (A(d,r_0,\theta_i(x)) v_{n,\Omega_i} r_i(x) \sin \theta_i(x) {\e}^{-r_i(x)^2 \sin^2 \theta_i(x)} + B(x) {\e}^{-r_0^2} \right ),
\end{equation*}
whenever $x \in B_1(x_0)$. Also note that $\theta_i(x) = 0$ if $x \in \Omega_i$, see \cref{fig:thm2}.

The power of rejecting $H_0$ when $H_1'$ is true can be calculated by
\begin{equation*}
  \P(T > \delta).
\end{equation*}
Define the events
\begin{align*}
  A & := \{\max_m |L_{n,t} f(X_m)- L_t f(X_m)| < \delta\},
  \\
  B & := \{\max_m |L_{t} f(X_m)|\chi_{B_1(x_0)}(X_m) > 2 \delta\}.
\end{align*}
Then, for any index $m$ satisfying $A\cap B$ we have
\begin{equation*}
  \max_m |L_{n,t} f(X_m)| \geq \max_m (|L_t f(X_m)| - |L_{n,t} f(X_m) - L_t f(X_m)|) \geq \delta,
\end{equation*}
which implies, together with the union bound, that
\begin{align*}
  \P(T > \delta)
   & \geq
  \P(A \cap B) \\
   & =
  1- \P(A^c \cup B^c)
  \\
   & \geq
  1 - \P(A^c) - \P(B^c).
\end{align*}

We can bound $\P(A^c)$ immediately using \cref{thm:finite:sample}:
\begin{equation*}
  \P(\max_m |L_{n,t} f(X_m)- L_t f(X_m)| > \delta) \leq \alpha.
\end{equation*}

We next bound $\P(B^c)$, using \cref{thm:explicit:intersection} and the definition of $\delta$:
\begin{align*}
  \P(B^c)
   & \leq \P(\max_m |L_t^1 f(X_m)|\chi_{B_1(x_0)\cap \Omega_2}(X_m) < 2\delta + L_t^2 f(X_m)\chi_{B_1(x_0)\cap \Omega_2}(X_m))
  \\
   & \leq
  \P(\max_m |L_t^1 f(X_m)|\chi_{B_1(x_0)\cap \Omega_2}(X_m) < 3\delta)
  \\
   & = \P(|L_t^1 f(X_m)| \chi_{B_1(x_0) \cap \Omega_2}(X_m) < 3\delta)^n.
\end{align*}

Applying \cref{thm:explicit:intersection}, together with our bound on $t$ which ensures that $B(x) {\e}^{-r_0^2} \leq \delta$, we obtain that for $x \in B_1(x_0) \cap \Omega_2$
\begin{equation*}
  |L_t^1 f(x)| \geq t^{\frac{d+1}{2}} \frac{p}{2}\pi^{d/2} |v_{n,\Omega_1}| \frac{\|x-x_0\|}{\sqrt{t}} |\sin \theta_1(x)| {\e}^{-\|x-x_0\|^2 \sin^2 \theta_1(x)/t} - \delta .
\end{equation*}
Denote $G(x) = x {\e}^{-x^2}$, then $G^{-1}(y) = \sqrt{-\W_{0}(-2y^2)/2}$ and $G^{-1}(y) = \sqrt{-\W_{-1}(-2y^2)/2}$ are the two solutions to $G(x) = y$, and $\W$ is the Lambert $\W$ function and the subindex denotes the two different branches.
Thus, we can estimate
\begin{equation*}
  \P(|L_t^1 f(X_m)| \chi_{B_1(x_0) \cap \Omega_2}(X_m) < 3\delta)
  \leq
  \P(G(\|X_m-x_0\| \sin \theta_1/ \sqrt{t}) \chi_{B_1(x_0) \cap \Omega_2} < \frac{\delta}{t^{\frac{d+1}{2}}C})
\end{equation*}
where $C = p\pi^{d/2} |v_{n,\Omega_1}|/8$.
Using the inverse of $G$ as above we get the following terms
\begin{align*}
  P_1 & :=\P\left (\|X_m-x_0\| < \frac{\sqrt{-t \W_{0}\left (-2\frac{\delta^2}{t^{d+1} C^2} \right )}}{\sqrt{2}\sin \theta_1}; X_m \in B_1(x_0) \cap \Omega_2 \right ),
  \\
  P_2 & :=\P\left (\|X_m-x_0\| > \frac{\sqrt{-t \W_{-1}\left (-2\frac{\delta^2}{t^{d+1} C^2} \right )}}{\sqrt{2}\sin \theta_1}; X_m \in B_1(x_0) \cap \Omega_2 \right ), \\
  P_3 & :=\P(X_m \in (B_1(x_0) \cap \Omega_2)^c).
\end{align*}
For $P_1$ we note that $\W_0$ satisfies
\begin{equation*}
  -\W_0(-\varrho) < \e \varrho,
\end{equation*}
for $0 \leq \varrho \leq 1$, as such
\begin{equation*}
  P_1 \leq \P\left (\|X_m-x_0\| < \frac{\sqrt{\e} \delta}{t^{d/2} C \sin \theta_1}\right ).
\end{equation*}
Thus, if we require that
\begin{equation*}
  \frac{\sqrt{\e} \delta}{t^{d/2} C \sin \theta_1} \leq 1/4,
\end{equation*}
which is the second condition in~\cref{eq:power}, we have that $P_1 \leq \P(X_m \in B_{1/4}(x_0))$.
Secondly, we know that
\begin{equation*}
  -\W_{-1}(-\varrho) > \log 1/\varrho,
\end{equation*}
for $0 < \varrho < 1$, and as such
\begin{equation*}
  P_2 \leq \P\left (\|X_m-x_0\| > \frac{\sqrt{t \log\left (\frac{t^{d+1} C^2}{2\delta^2} \right )}}{\sqrt{2}\sin \theta_1}; X_m \in B_1(x_0) \cap \Omega_2 \right ).
\end{equation*}
Which if
\begin{equation*}
  t \left ((d+1)\log(1/t) + \log \left (\frac{2^3}{C^2} \right ) \right ) \leq 2(1-\sin^2 \theta_1),
\end{equation*}
which is the first condition in~\cref{eq:power}, is bounded as
\begin{equation*}
  P_2 \leq \P (X_m \in (B_1(x_0) \setminus B_{1/2}(x_0)) \cap \Omega_2).
\end{equation*}
In conclusion, we have that for large enough $n$, $P_1+P_2+P_3 < 1$.
Assembling the above we have proven our theorem.
  {\flushright \qed}

\section{Proof of \cref{thm:general} and \cref{thm:noisy}}\label{sec:proof:general}

\subsection{Proof of \cref{thm:general}}
We begin by splitting up the domain $\Omega_i$:
\begin{equation}\label{eq:general:integral}
  \begin{split}
    L^i_tf(x) & = \int_{\Omega_i} K_t(x,y)(f(x)-f(y))p \dy                      \\
              & = \int_{\Omega_i \cap B_R(x)}K_t(x,y)(f(x)-f(y))p\dy
    \\
              & \quad + \int_{\Omega_i \setminus B_R(x)}K_t(x,y)(f(x)-f(y))p\dy \\
              & = I + II.
  \end{split}
\end{equation}
We first note that
\begin{equation}\label{eq:outside_integral}
  II = \int_{\Omega_i \setminus B_R(x)}K_t(x,y)(f(x)-f(y))p\dy \leq {\e}^{-R^2/t} \diam(\Omega).
\end{equation}

To estimate $I$ we will make a change of variables to the tangent space at $x_0$ and use arguments similar to those in the proof of \cref{thm:explicit:intersection}. Specifically, let  $\pi: \Omega_i \cap B_R(x) \to T_{\Omega_i,x_0}\cap B_R(x)$ be the projection map, and $\alpha =   \pi^{-1} \circ i: \R^d \cap B_R(0) \to \Omega_i \cap B_R(x)$ a coordinate chart as in~\eqref{eq:chart}. We will use $\alpha$ to integrate over $T_{\Omega_i,x_0}$.

To simplify notation, we will use $\hat x$ and $\hat y$ to denote both $\pi(x),\pi(y) \in \R^N$, and sometimes implicitly assume the projection $i^{-1}$ such that $\hat x, \hat y \in \R^d$. The space in which these points lie should be clear from context.

Before making the coordinate change, we find bounds relating $K(x,y)$ to $K(x,\hat y)$:
We recall that $K_t(x,y) = {\e}^{\frac{-\norm{x-y}^2}{t}}$. By the triangle inequality and $\norm{y - \hat{y}} \leq 4LR^2$ from~\eqref{eq:manifold:bound1},
\begin{equation*}
  \big| \norm{x-y} - \norm{x-\hat{y}} \big| \leq \norm{y - \hat{y}} \leq 4LR^2.
\end{equation*}
Since $y \in B_R(x)$ we have $\norm{x-y} \leq R$ and $\norm{x-\hat{y}} \leq R + 4LR^2$, so using $|a^2 - b^2| = |a-b|(a+b)$,
\begin{equation*}
  |\norm{x-y}^2 - \norm{x-\hat{y}}^2| \leq 4LR^2(2R + 4LR^2) = 8LR^3 + 16L^2R^4.
\end{equation*}
This yields
\begin{equation*}
  {\e}^{-(8LR^3 + 16L^2R^4)/t} K_t(x,\hat{y}) \leq K_t(x,y) \leq {\e}^{(8LR^3 + 16L^2R^4)/t}K_t(x,\hat{y}).
\end{equation*}
Using $|{\e}^x - 1| \leq |x|{\e}^{|x|}$, we get
\begin{equation}\label{eq:CK}
  |K_t(x,y) - K_t(x,\hat y)| \leq \frac{8LR^3 + 16L^2R^4}{t}{\e}^{(8LR^3 + 16L^2R^4)/t} K_t(x,\hat y).
\end{equation}
Replacing $K_t(x,y)$ with $K_t(x,\hat y)$ in $I$ we get
\begin{equation}\label{eq:inside_signal}
  I = \int_{\Omega_i \cap B_R(x)} K_t(x,\hat{y})(f(x)-f(y))p\dy + E_1,
\end{equation}
and using \cref{eq:CK} it holds that
\begin{equation}\label{eq:inside_noise}
  |E_1| \leq \frac{8LR^3 + 16L^2R^4}{t}{\e}^{(8LR^3 + 16L^2R^4)/t} R \left|\int_{\Omega_i \cap B_R(x)} K_t(x,\hat{y})p\dy\right|.
\end{equation}

We now decompose the integral of the first term in \cref{eq:inside_signal} as follows
\begin{align}\label{eq:proof:linear}
  \notag & \int_{\Omega_i \cap B_R(x)} K_t(x,\hat{y})(f(x)-f(y))p\dy
  =  \int_{\Omega_i \cap B_R(x)} K_t(x,\hat{y})(f(x)-f(\hat{y}))p\dy                  \\
         & \qquad + \int_{\Omega_i \cap B_R(x)} K_t(x,\hat{y})(f(\hat{y})-f(y))p\dy =
  I_1 + I_2
\end{align}
The quantity $I_2$ will be treated like an error term.
Using \cref{eq:manifold:bound1} we see that
\begin{equation*}
  |I_2| \leq \int_{\Omega_i \cap B_R(x)} K_t(x,\hat{y})L\norm{\hat{y}-x_0}^2 p\dy.
\end{equation*}
Now we make a coordinate change with $\alpha$ and use the bound on the volume form in \cref{eq:manifold:bound2} to get
\begin{equation*}
  \begin{split}
     & \int_{\Omega_i \cap B_R(x)} K_t(x,\hat{y})L\norm{\hat{y}-x_0}^2 p\dy                                      \\
     & \qquad  \leq LR^2\int_{T_{\Omega_i,x_0}\cap B_R(x)}K_t(x,\hat{y})(1+L\norm{x_0-\hat{y}}^2)p \dvar{\hat y} \\
     & \qquad \leq LR^2(1+L4R^2)\int_{T_{\Omega_i,x_0}\cap B_R(x)}K_t(x,\hat{y})p\dvar{\hat y} .
  \end{split}
\end{equation*}
The RHS of the above display can be handled similarly to \cref{eq:II}, which means
\begin{equation*}
  \begin{split}
    |I_2| & \leq  LR^2(1+L4R^2)\left| \mathbb{S}^{d-1}\right|t^{d/2}p\Gamma(d/2)
    =  LR^2(1+L4R^2)t^{d/2}2p\pi^{d/2}.
  \end{split}
\end{equation*}

We proceed now with $I_1$ from \cref{eq:proof:linear}, which we want to estimate as accurately as possible. Using the coordinate change $\alpha$ and \cref{eq:manifold:bound2} we write
\begin{align} \label{eq:proof:I1}
  \notag I_1
   & = {\e}^{-r^2 \sin^2 \theta}\int_{\Omega_i \cap B_R(x)}K_t(\hat x,\hat y)(f(x)-f(\hat y))p \dy                         \\
   & = {\e}^{-r^2 \sin^2 \theta}C_1(x)\int_{T_{\Omega_i,x_0} \cap B_R(x)}K_t(\hat x,\hat y)(f(x)-f(\hat y))p\dvar{\hat y},
\end{align}
where $C_1(x) > 0$ is such that $|C_1 -1| \leq (1+L4R^2)$.

The integral on the right in \cref{eq:proof:I1} is exactly $II$ from \cref{eq:thm1_II_III}, which we compute as in \cref{eq:II}:
\begin{equation}\label{eq:proof:I1+}
  \int_{T_{\Omega_i,x_0} \cap B_R(x)}K_t(\hat x,\hat y)(f(x)-f(\hat y))p\dvar{\hat y}
  = A(d,r_0,\theta)v_{n,\Omega_i}t^{\frac{d+1}{2}}r\sin\theta,
\end{equation}
where $A(d,r_0,\theta)$ is as in~\eqref{eq:A}. Now, from \cref{eq:proof:linear,eq:proof:I1,eq:proof:I1+} we have
\begin{equation*}
  \begin{split}
    I_1 + I_2 =  C_1(x) A(d,r_0,\theta)v_{n,\Omega_i}t^{\frac{d+1}{2}}r\sin\theta {\e}^{-r^2 \sin^2\theta}  + C_2(x)LR^2(1+L4R^2)t^{d/2}2p\pi^{d/2},
  \end{split}
\end{equation*}
for a function $|C_2(x)| \leq 1$.
This combined with the split in~\eqref{eq:inside_signal} and~\eqref{eq:inside_noise} gives us
for another function $|C_3(x)| \leq 1$
\begin{align*}
  I = & I_1 + I_2 + E_1 = C_1(x) A(d,r_0,\theta)v_{n,\Omega_i}t^{\frac{d+1}{2}}r\sin\theta {\e}^{-r^2 \sin^2\theta}
  \\
      & + C_3(x) C_{L,R} t^{d/2}2p\pi^{d/2}
\end{align*}
where
\begin{equation*}
  C_{L,R} := \frac{(8LR^3 + 16L^2R^4)R}{t}{\e}^{(8LR^3 + 16L^2R^4)/t}+LR^2(1+4LR^2).
\end{equation*}
Finally, the above and~\eqref{eq:outside_integral} finishes the proof.
  {\flushright \qed}

\subsection{Proof of \cref{lem:error}}\label{sec:proof:error}
First applying \cref{thm:general} to $L_t^if(x)$, and then using the $(L,2R)$ regularity of $\Omega_i$, we use \cref{eq:manifold:bound1} to get
\begin{equation*}
  |r \sin \theta| = \frac{\|x-\hat x\|}{\sqrt{t}} \leq \frac{L R^2}{\sqrt{t}} = L r_0^2 \sqrt{t}.
\end{equation*}
Thus, the signal term from \cref{thm:general} satisfies
\begin{equation*}
  \left|t^{\frac{d+1}{2}} \widehat A v_{n,\Omega_i} r \sin \theta \, {\e}^{-r^2 \sin^2 \theta}\right| \leq t^{\frac{d+1}{2}} \widehat A \cdot L r_0^2 \sqrt{t} = t^{\frac{d+2}{2}} \widehat A L r_0^2.
\end{equation*}
Combining with the error term $t^{(d+2)/2}\widetilde{C}_{L,r_0}$ from \cref{thm:general}, we obtain the bound on $E_{L,r_0}$.
  {\flushright \qed}

\subsection{Proof of \cref{thm:noisy}}\label{sec:proof:noisy}
To simplify notation, let $h_j = x-X_j$.
\begin{align} \label{eq:thm:stochastic}
  \notag \E_\epsilon L_{n,t}f(x)
   & = \frac{1}{n}\sum_{j=1}^n \E_\epsilon K_t(x,X_j)(f(x)-f(X_j+\epsilon_j))                   \\
   & = \frac{1}{n}\sum_{j=1}^n \E_\epsilon {\e}^{-\norm{h_j-\epsilon_j}^2/t}v\cdot(h_j-\epsilon_j)
\end{align}
Let us compute a single term in the sum in \cref{eq:thm:stochastic}: Since the expectation is w.r.t $\epsilon$ we can treat $h = h_j$ as fixed, and then algebraic manipulations give us for $z \sim \mathcal{N}(0,\sigma^2 I)$
\begin{align*}
         & \notag \E_{z}{\e}^{-\norm{h+z}^2/t}(h+z)                                                                                                                              \\
         & \quad=(2\pi\sigma^2)^{-N/2}\int_{\mathbb{R}^N} {\e}^{-\norm{h+z}^2/t}{\e}^{-\norm{z}^2/(2\sigma^2)}v\cdot(h+z)\dz                                                        \\
  \notag & \quad=(2\pi \sigma^2)^{-N/2}\int_{\mathbb{R}^N} {\e}^{-\left(\norm{h}^2/t + 2 \langle h, z \rangle / t + \norm{z}^2/t + \norm{z}^2 /(2\sigma^2)\right)}v\cdot(h+z)\dz \\
         & \quad= (2\pi \sigma^2)^{-N/2} {\e}^{-\frac{\norm{h}^2}{2 \sigma^2 + t} } \int_{\mathbb{R}^N} {\e}^{-\frac{1}{kt}\norm{z+kh}^2} v\cdot(h+z) \dz.
\end{align*}
In the second to last step we completed the square and  used $k=\frac{2\sigma^2}{2\sigma^2+t}$.
This last integral can be viewed as the expectation
\begin{equation*}
  (\pi kt)^{-N/2} \int_{\mathbb{R}^N} {\e}^{-\frac{1}{kt}\norm{z+kh}^2} v\cdot(h+z)\dz = \E_X [(h+X)] = (1-k)v\cdot h
\end{equation*}
where $X \sim \mathcal{N}(-kh,I\frac{kt}{2})$. Then we can conclude that
\begin{equation*}
  \E_{z}{\e}^{-\norm{h+z}^2/t}v \cdot (h+z) = v \cdot h \frac{t^{N/2+1}}{(2 \sigma^2 + t)^{N/2+1}} {\e}^{-\frac{\norm{h}^2}{2 \sigma^2 + t} }.
\end{equation*}
{\flushright \qed}

\section{Proof of \cref{thm:finite:sample}}\label{sec:proof:finite}
First, using the union bound we get
\begin{equation} \label{eq:unionbound}
  \begin{split}
     & P \bigg( \max_i \left |L_{n,t}f(X_i) - \frac{n-1}{n} L_t f(X_i) \right | > \epsilon \bigg )                      \\
     & \qquad \leq \sum_{i=1}^n P \left ( \left |L_{n,t}f(X_i) - \frac{n-1}{n} L_t f(X_i) \right | > \epsilon \right ).
  \end{split}
\end{equation}
Using the definitions of $L_{n,t}$ and $L_{t}$, see \cref{eq:Lnt,eq:Lt}, and using that the random variables $X_1,\dots,X_n$ are i.i.d., we can replace each $X_i$ by $X_1$ in each term in the summand of \cref{eq:unionbound}. Let $Z$ be an independent copy of $X_1$. Then each summand in \cref{eq:unionbound} equals
\begin{equation}\label{eq:prob1}
  \begin{split}
    P \bigg( & \bigg | \frac{1}{n} \sum_{j=1}^n K_t(X_1,X_j) (f(X_1)-f(X_j)) \\
             & \quad - \frac{n-1}{n}\E_Z[K_t(X_1,Z)(f(X_1)-f(Z))]\bigg |
    > \epsilon \bigg ).
  \end{split}
\end{equation}
To simplify notation, we denote
\begin{equation*}
  W_i(x) = K_t(x,X_i)(f(x)-f(X_i))  \quad\mathrm{and}\quad
  Y_i(x) = W_i(x)-\E_{X_i}[W_i(x)].
\end{equation*}
We now rewrite \cref{eq:prob1} as
\begin{equation*}
  P \left ( \left |\frac{1}{n-1}\sum_{i=2}^n Y_i(X_1)\right | > \frac{n}{n-1}\epsilon \right ).
\end{equation*}
Now by the tower property we have that
\begin{equation*}
  P \left ( \left |\frac{1}{n-1}\sum_{i=2}^n Y_i(X_1)\right | > \frac{n}{n-1}\epsilon \right ) = \E\left [P \left ( \left |\frac{1}{n-1}\sum_{i=2}^n Y_i(X_1)\right | > \frac{n}{n-1}\epsilon  \mid X_1 \right ) \right ]
\end{equation*}
In order to use Hoeffding's inequality we need to show that $Y_i(x)$ is a bounded random variable for all $x \in \Omega$.
First,
\begin{align*}
  W_i(x) & = K_t(x,X_i)(f(x)-f(X_i))
  =
  {\e}^{-\norm{x-X_i}^2/t}v \cdot (x-X_i)
  \leq
  {\e}^{-\norm{x-X_i}^2/t} \|x-X_i\|
  \\
         & \leq
  \sup_{\rho} {\e}^{-\rho^2} \rho \sqrt{t}
  \leq \sqrt{\frac{t}{2\e}}.
\end{align*}
Now Hoeffdings inequality states that (where $C_n = \frac{n}{n-1}$)
\begin{equation*}
  \P \left ( \left | \frac{1}{n-1} \sum_{i=2}^n Y_i \right | > C_n \epsilon \mid X_1 \right ) \leq 2 \exp\left ( -\frac{4\e (n-1) C_n^2 \epsilon^2}{t} \right )
\end{equation*}
and the proof is complete after taking expectations.
  {\flushright \qed}

\section{Construction of the neural network dataset}\label{sec:constraint}

As explained in \cref{sec:num:neural}, we consider neural networks of the form given in \cref{sec:neural}, where $k=3$ and $W,W^* \in \mathbb{R}^{6}$, $a_1=-a_2=-a_3$, all three weights $w^\ast_i$ are the same. We will also set $\mathbf{B} = [-10,10]^{6}$.

In order to create our data-set we begin by setting up a regression problem, i.e. we sample $\mathcal{D}=\{x_1,\ldots,x_{100}\}$ from the uniform density on the unit circle in $\R^2$, and our data pairs for the regression is then $(x_i,f_{W^\ast}(x_i))$. We are interested in the zero set of the neural network with respect to the dataset $(x_i,f_{W^\ast}(x_i))$, i.e. 
\begin{equation*}
  \widehat{\Omega}_\delta := \{W \in \mathbf{B}: |f_W(x_i) - f_{W^\ast}(x_i)| < \delta, \quad x_i \in \mathcal{D}  \}.
\end{equation*}

To try to limit computations, we use interval constraint propagation, and more specifically forward-backward propagation in the computational tree of the expressions $|f_W(x_i) - f_W^{*}(x_i) | < \delta$, where $\delta \approx 10^{-16}$ and $x_i \in \mathcal{D}$ to produce a paving (a true covering of boxes). For more information about these methods, see \cite{jaulin2014introduction,jaulin2001guaranteed,david_p_sanders_2022_5888392}.
The resulting paving is a set of boxes $\mathbb{X}^+$, where each box in $\mathbb{X}^+$ has a maximum width of $0.01$.
We will again reiterate here that the choice of $k=3$ and box-size $0.01$ is what we found to be the largest $k$ and smallest box size that still allowed us to compute the zero set in a reasonable time, and we ended up with roughly $10^7$ boxes in $\mathbb{X}^+$.

To construct a point of the zero set we will simply choose the centroid of each box as a representative. By construction, we know that such a centroid is at most $0.005\sqrt{6}$ distance away from a point in $\widehat{\Omega}_\delta$. In our experiment, this leads to a dataset of $12,753,597$ points $X$ that are within a small distance ($0.005\sqrt{6}$) of $\widehat{\Omega}_\delta$ such that if we put a box around each with side-length $0.01$, then the union of these boxes will cover $\widehat{\Omega}_\delta$. This is in contrast with other search methods, like a grid search (which would require $\sim 10^{19}$ boxes) or a random search (hitting probability is $10^{-13}$) that cannot guarantee the coverage of $\widehat{\Omega}_\delta$.

\end{document}